\let\orgdescriptionlabel\descriptionlabel
\renewcommand*{\descriptionlabel}[1]{%
  \let\orglabel\label
  \let\label\@gobble
  \phantomsection
  \edef\@currentlabel{#1}%
  \let\label\orglabel
  \orgdescriptionlabel{#1}%
}
\newtheorem{proposition}{Proposition}
\newtheorem{lemma}{Lemma}
\newtheorem{theorem}{Theorem}
\newtheorem{definition}{Definition}
\title{The Restless Hidden Markov Bandit\\with Linear Rewards and Side Information}
\author{
	Michal Yemini\thanks{
	Department of Electrical Engineering. Stanford University, USA. \texttt{michalye@stanford.edu}},
	Amir Leshem\thanks{
	Faculty of Engineering, Bar-Ilan University,  Ramat Gan, Israel 	\texttt{leshem.amir2@gmail.com}. Amir Leshem was partially supported by ISF grants ISF 2277/16 and ISF 1644/18.}
	and
	Anelia Somekh-Baruch\thanks{
	Faculty of Engineering
	Bar-Ilan University, Ramat Gan, Israel
	\texttt{somekha@biu.ac.il}}
	\thanks{A summary of the results presented in this paper was accepted to the 59th Conference on Decision and Control.}
}
\begin{document}
	
	\maketitle
	
	\begin{abstract}
	In this paper we present a model for the hidden Markovian bandit problem with linear rewards.  As opposed to current work on Markovian bandits, we do not assume that the state is known to the decision maker before making the decision. Furthermore, we assume structural side information where the decision maker knows in advance that there are two types of hidden states; one is common to all arms  and evolves according to a Markovian distribution, and the other is unique to each arm and is distributed according to an i.i.d. process that is unique to each arm. 
	We present an algorithm and regret analysis to this problem.
	Surprisingly, we can recover the hidden states and maintain logarithmic regret in the case of a convex polytope action set. Furthermore, we show that the structural side information leads to expected regret that does not depend on the number of extreme points in the action space. Therefore, we obtain practical solutions even in high dimensional problems.
	\end{abstract}
	
	\section{Introduction}
	\subsection{Preliminaries}\label{sec:prob_preliminaries}
	This work considers a setup in which at each time instant $t$, a decision maker chooses an arm $b_t\in\mathcal{B}\subset\mathbb{N}$ to pull and an action $\boldsymbol a_t\in\mathcal{A}\subset\mathbb{R}^N$ and gets a reward that depends linearly on a random function of an unknown  system state  
	$s_t\in\mathcal{S}\subset\mathbb{N}$  and the chosen arm $b_t\in\mathcal{B}$ in the following way: 
	\begin{flalign}\label{eq:noiseless_reward}
	r_t(b_t,\boldsymbol a_t) = \left\langle\boldsymbol a_t,\boldsymbol \theta(b_t,s_t)\right\rangle,
	\end{flalign}
	where  $\left\langle \boldsymbol x,\boldsymbol y\right\rangle$ is the inner product between  $\boldsymbol x$ and $\boldsymbol y$. The reward function $r_t(b_t,\boldsymbol a_t)$ depends on two types of system states $s_t$ and $\boldsymbol \theta(b_t,s_t)$. The first type of state, denoted by $s_t$, is common to all arms and represents a ``global" system state, whereas the second system state $\boldsymbol \theta(b_t,s_t)$ depends on both the ``global" system state $s_t$ and the arm chosen $b_t$.
	We assume that the set $\mathcal{A}$ is compact and that the sets $\mathcal{S}$ and $\mathcal{B}$ are finite.
	The process $(s_t)_{t=1,2,\ldots}$ is a time-homogeneous, irreducible and aperiodic Markov chain over a finite state space $\mathcal{S}$. 
	Additionally, for each $s\in\mathcal{S}$ and $b\in\mathcal{B}$, $\boldsymbol\theta(b,s)$ is a random function with a range $\boldsymbol\Theta_{b,s}\subset\mathbb{R}^N$.
	We refer to this model as the \textit{restless hidden Markov bandit model with a linear reward}. In this work we consider a setup in which a decision maker only knows the sets $\mathcal{A},\mathcal{S}$ and $\boldsymbol\Theta_{b,s}$ for all $(b,s)\in\mathcal{B}\times\mathcal{S}$ but neither the transition probabilities of the Markov process $(s_t)_{t=1,2,\ldots}$, the probability distribution of the random function $\boldsymbol \theta(b, s)$, nor their realizations at time $t$.    
	This model captures, for example,  an uplink cognitive radio communication network with a wide-band primary user and a narrow-band secondary user, we depict this system in Figure \ref{fig:system_exmaple}. The wide-band primary user  communicates in the  2.4GHz band and if active uses all   83MHz available bandwidth. The presence of the primary user is modeled by a Gilbert–Elliott model \cite{4518280} that comprises a Markov chain $S$ with two binary states, where state  $s=1$ denotes a transmitting primary user and $s=0$ denotes a vacant channel, i.e., inactive primary user. To limit the interference to the primary user, a secondary user may choose to transmit over one of three possible  20MHz channels, namely channels 1,6 or 11, these channels are describe by the arm set $\mathcal{B}$. Additionally, each of these   channels is partitioned into $N=100$ sub-bands of $200$KHz. After choosing a frequency band (depicted by an arm choice $b$) for transmission, the secondary user chooses a frequency hopping sequence (denoted by the vector $\boldsymbol{a}$)  over the sub-bands included in the chosen channel, subject to a total time utilization constraint over the sub-bands. Upon making the choices, the secondary user does not know what the current rates of the chosen frequency bands is i.e., the vector $\boldsymbol\theta$, but only knows the communication rates  of previous transmissions. At the end of a transmission the receiver sends the achieved communication rate to the secondary user via a high-capacity backhaul link.\footnote{We note that the receiver does not send the secondary user the vector $\boldsymbol\theta$ since the secondary user may not use all the frequency bands in a channel, this is especially true in a high dimensional setup, i.e., $N\gg1$ where reducing the number of bands whose state the receiver needs to estimate is paramount to reducing power consumption, delays and decoding complexity scheme.}  
	
	\begin{figure}
	    \centering
	    \hspace{-0.8cm}
	    \includegraphics[scale=0.35,trim=5.8cm 2.5cm 4cm 1cm, clip]{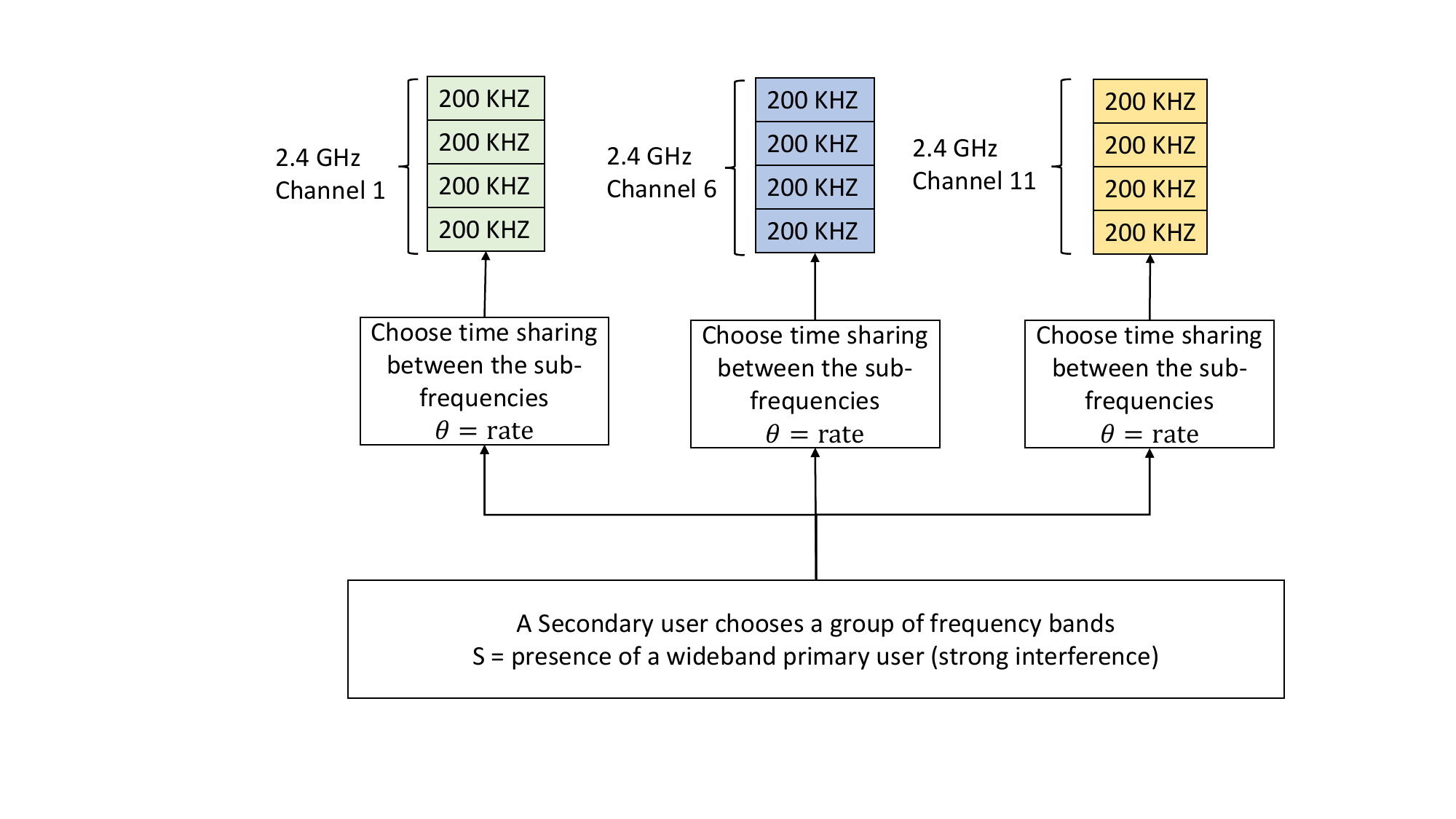}
	    \caption{An example of a cognitive communication network.}
	    \label{fig:system_exmaple}
	\end{figure}
	  
	\subsection{Discussion}
	The restless hidden Markov bandit model with linear rewards described in (\ref{eq:noiseless_reward}) is related to several learning models, among them are   Markov bandits and Markov decision processes, stochastic linear bandits, restless bandits models,  partially observed Markov decision processes, and bandits with structural side information.
	In Markov bandit models the reward is generated by each arm independently of other arms, and changes over time according to a Markov process that progresses over time when an arm is played. In \cite{1104485}, an analysis of the expected regret of a policy for the Markov bandit model was performed. It compared the expected reward of the policy to that of the arm with the best expected reward that was found based on the stationary distribution of the Markov chain.
	In the Markov decision process (MDP) literature, which considers a Markov process that controls the state of a system, it is assumed that this state is known to the decision maker upon choosing an action to play \cite{5510097,6200864,Auer07logarithmiconline, Bartlett2009REGALAR,Jaksch:2010:NRB:1756006.1859902,Qian2018ExplorationBF,Fruit:2018}. In the work \cite{1104485} the decision maker maximizes its expected reward by playing the arm with the maximum expected reward based on the stationary distribution, therefore the optimal policy in \cite{1104485} does not maximize reward based on the current system state. An instantaneous in time minimization regret approach which depends on the current system state and the transition probabilities of the Markov chain, and not only its stationary distribution  has been extensively investigated in many papers.  Among them are \cite{5510097,6200864,Auer07logarithmiconline} which attain a logarithmic regret that depends on the Markov chain parameters and the size action space, assuming that the Markov chain is a unichain.  Additionally, the general case that includes weakly communicating Markov chains is investigated in \cite{Bartlett2009REGALAR,Jaksch:2010:NRB:1756006.1859902,Qian2018ExplorationBF,Fruit:2018} and \cite{DBLP:journals/corr/abs-1808-01813} in which a scheme that achieves an $\tilde{O}(\sqrt{T})$ regret that depends on the MDP parameters is considered. It was also proven in \cite{Fruit:2018} that it is not possible to achieve a logarithmic regret with a polynomial dependency on the MDP parameters, assuming no prior knowledge regarding the bias span is available. These results were derived under the assumption that the decision maker knows the state of the Markov process before choosing an action, additionally, it is assumed that the transition probabilities of the Markov chain depend on the action played, and that no side information regarding the reward function is known.
	
	The restless hidden Markov bandit model with linear rewards
is also related to stochastic linear bandits, which study the model 
in which at each time $t$ a decision maker chooses an action vector $\boldsymbol a_t$ from a predefined set and receives a reward that is a linear function of the action vector, i.e., $r_t=\left\langle\boldsymbol a_t,\boldsymbol \theta_t\right\rangle+w_t$, (see for example \cite{Auer:2003:UCB:944919.944941,Dani2008StochasticLO,doi:10.1287/moor.1100.0446,7472588}).
It is assumed that $\boldsymbol \theta_t$ is unknown and that $w_t$ is a random noise. Using confidence bounds and optimism in face of uncertainty the aforementioned works derived expected regret bounds under several assumptions regarding the probability distributions of the vectors $\boldsymbol \theta_t$ and the noise $w_t$, and the action set of the decision maker.        
Another related model is  the contextual bandit model with expected linear rewards \cite{10.1145/1772690.1772758,pmlr-v15-chu11a,DBLP:journals/corr/Zhou15c} where at each time instant the  reward of arm $b$ in a set of arms $\mathcal{B}$ is a random function $r_{t,b}=\left\langle\boldsymbol a_{t,b},\boldsymbol \theta\right\rangle+w_t$. The vector $\boldsymbol a_{t,b}$ is a context vector that is revealed to the decision maker at each time $t$ before choosing an arm $b$ to play, and $\boldsymbol\theta$ is an unknown vector that the decision maker tries to estimate (exploration) while aiming at maximizing the  total reward (exploitation).

	The restless hidden Markov bandit model with linear rewards is also related to the restless Markov bandits investigated, for example, in \cite{6120273,ORTNER201462,Meshram2017,8695066}. In this setup, the process that governs the arms constantly evolves regardless of which arm is pulled. We note that our model is different from \cite{Meshram2017} since our setup assumes that the Markov chain that governs the system states is common to all arms, however, the arm selection affects the reward received for the current system state. Another relevant model is the partially observed Markov decision process (POMDP) \cite{Kaelbling1998PlanningAA,10.2307/40538383}. In this model, 
 a decision maker aims to maximize its expected accumulative reward and has to balance its desire to increase the immediate reward with the benefits of improving the belief of the unknown state of the system. Other related POMDP models include \cite{969499,1200155,7799449}. The works \cite{969499,1200155} consider a tracking problem with independent objects and uses an approximated Gittins index approach for finding policies. In \cite{7799449}  an  information acquisition and sequential belief refinement with a finite number of possible actions is considered. Finally, our model is also related to
	the Gaussian mixture models for the multi-armed contextual bandit model considered in \cite{Urteaga2018NonparametricGM} in which the reward distributions are approximated using nonparametric Gaussian mixture models. A notable difference is, however, that  since the structure of the reward function is known to be linear in our model, we estimate the probability distribution of the system instead of estimating the reward distribution for each action.
	
	Finally, the decision maker in the restless hidden Markov bandit model with linear rewards has a structural side information, that is, the decision maker knows in advance that the reward function is linear and that the hidden states are composed of two types: a state that is common to all arms, and a state that depends on the arm played. However, the decision maker does not know the mean reward or exact probability distribution of each of these hidden states. We show that the decision maker can take advantage of this side information regarding the reward function, i.e., its linearity and the two types of states, to maximize the expected reward. Therefore, the restless hidden Markov bandit model with linear rewards relates to learning  problems with structural side information see \cite{agrawal1995continuum,kleinberg2008multi,magureanu2014lipschitz,slivkins2011multi,combes2014unimodal,8798887,combes2017minimal,NIPS2018_8103,8619134}. The papers \cite{8798887,agrawal1995continuum,kleinberg2008multi,magureanu2014lipschitz,slivkins2011multi,combes2014unimodal,combes2017minimal} consider multi arm bandit problems and the papers \cite{NIPS2018_8103,8619134} consider MDPs. In particular, in regards to MDPs, the work \cite{NIPS2018_8103}  considers MDPs with structural side information.  It presents explicit regret bounds and tractable algorithms for two special models, the first model is an MDP with no structural side information and the second model is an MDP with transition probability and mean reward function that are Lipschitz functions of the state and action spaces in an embedded Euclidean space. In the case of Lipschitz structured MDPs the expected regret function may be independent of the cardinality of the action and state spaces, however, this is achieved by increasing the power of the Markov chain span coefficient, introducing additional variables that depend on the embedded Euclidean space and calculating the optimal policy at each time instant and not in increasing intervals of time. The paper \cite{8619134} assumes side information that upper bounds the maximal difference of the transition probabilities beginning at two different states. These two states are called similar if the bound is small. The learning of the transition matrix of the MDP is then accelerated by using samples of transitions from one state in the estimation of the transition matrix of all its similar states. Note that our side information model differs from the Lipschitz structured MDP discussed in \cite{NIPS2018_8103} to which tractable algorithm is proposed. Additionally, our algorithm does not require the computation of the optimal policy every time instant, this is especially important when power consumption is considered. 
	 Furthermore, we do not assume that the decision maker knows in advance the upper bounds on the maximal difference of the transition probabilities beginning at two different states that are known in \cite{8619134}. Finally, we note that contrary to these works our setup also assumes that the decision maker has to infer the previous state from the previous actions and rewards. 
	 
	\textit{Contributions:} This work differs from the aforementioned works in several aspects: First, it is neither a classical stochastic linear bandit process since the state of the system evolves over time according to a Markov process.
	Furthermore, it is not a classical Markov bandit model nor a restless one since the states are not directly observed or given to the decision maker. Interestingly, we prove that the uncountability of the  action space or the cardinality of its set of extreme points does not affect the expected regret.  The scheme we propose takes advantage of structural side information regarding the problem and divides the estimation of the probability distribution which controls the system evolution into two parts; the first part estimates the transition matrix of the Markov chain common to all arms, and the second estimates the probability distributions of the unknown  parameter $\boldsymbol\theta$, instead of the expected reward, which depends on the system state and the arm. Our numerical results demonstrate the merits of our proposed scheme, namely the significant reduction of the expected regret of the decision maker as compared to an algorithm such as the UCRL algorithm \cite{Auer07logarithmiconline} that ignores structural side information.

	
	\section{Model Formulation}\label{sec:prob_for}
	This section defines the restless hidden Markov bandit problem with a linear reward in more detail. We consider the setup that is stated in (\ref{eq:noiseless_reward}).
	The process $(s_t)_{t=1,2,\ldots}$ is a finite space $\mathcal{S}$ Markov chain with a transition matrix $P_{S}$  and a unique stationary distribution  $\mu_{S}$. Our analysis holds for any distribution of the initial state $s_{t=1}$ and thus we do not make any assumption regarding the distribution of the initial state $s_{t=1}$.
	We denote the transition probability between state $\tilde s$ and $\check s$ in $\mathcal{S}$ by $P_{S}(\tilde s,\check s)$. Let $\mathcal{B}$ be the set of arms that the decision maker can choose from and
	let the action space $\mathcal{A}\subset\mathbb{R}^N$ of a decision maker be an $N$-dimensional compact and convex polytope. 
The set $\mathcal{A}$ represents the possible  resource  allocations to $N$ random processes that are captured by the $N$-dimensional random vectors  $\boldsymbol\theta$ at time $t=1,2,\ldots$, each process is depicted by a coordinate in the vector $\boldsymbol\theta$. 
	We denote the set of extreme  points (also known as vertices) of $\mathcal{A}$ by $\boldsymbol V$.
	We also assume that the set $\boldsymbol\Theta_{b,s}$ is finite for every $(b,s)\in\mathcal{B}\times\mathcal{S}$ and that $|\boldsymbol V|\gg |\boldsymbol\Theta_{b,s}|$. Additionally,  $\boldsymbol\Theta_{b,\tilde{s}}\cap\boldsymbol\Theta_{b,\check{s}}=\emptyset$ for every $\tilde{s}\neq \check{s}$. Finally, we denote by $P_{\boldsymbol\Theta_{b,s}}(\boldsymbol \theta)$ the probability distribution of the random vector $\boldsymbol \theta\in\boldsymbol\Theta_{b,s}$. 
	
	At each time $t$ the decision maker receives a reward $r_t = \left\langle\boldsymbol a_t,\boldsymbol \theta(b_t,s_t)\right\rangle$. Upon receiving this reward the decision maker chooses an arm $b_{t+1}$ to pull and an action choice $\boldsymbol a_{t+1}$, given the arm choices, actions and rewards of previous times, $1,\ldots,t$, and the sets $\mathcal{B},\mathcal{A},\mathcal{S}$ and $\boldsymbol\Theta_{b,s},\:(b,s)\in\mathcal{B}\times\mathcal{S}$. A key difference between the restless hidden Markov bandit model with linear rewards and the contextual bandit model with linear rewards  \cite{10.1145/1772690.1772758,pmlr-v15-chu11a,DBLP:journals/corr/Zhou15c} is that in the latter the vector $\boldsymbol{a_t}$ is a context vector that depicts the state of the system for each arm and that the decision maker cannot control, the decision maker chooses an arm $b_t$ in effort to estimate the vector $\boldsymbol\theta$ to maximize its total expected reward. 
	In our model the random system state at time $t$ is not provided to the decision maker, this system state is captured by the random pair $(s_t,\boldsymbol\theta_{b_t,s_t})$, where $s_t$ depicts the system evolution in time and $\boldsymbol\theta$ captures the unique characteristics of each arm given a system state. Furthermore, the decision maker in our model has another degree of freedom in the choice of the action vector $\boldsymbol a_t\in\mathcal{A}$.
	
	We define the regret of the \textit{hidden} restless Markov bandit model with linear rewards with respect to the expected reward of the restless Markov bandit model with linear rewards. This model assumes that the decision maker perfectly knows in advance all the parameters of the model as well as the identity of the previous state.
	
	\begin{definition}[The Restless Markov Bandit Model with Linear Rewards]
		In the restless Markov bandit model with  linear rewards a decision maker knows in advance the transition matrix $P_{S}$ and the probability distributions $P_{\boldsymbol\Theta_{b,s}},\: (b,s)\in\mathcal{B}\times\mathcal{S}$ as well as the sets $\mathcal{A},\mathcal{S}$ and $\boldsymbol\Theta_{b,s},\:(b,s)\in\mathcal{B}\times\mathcal{S}$. 
		\newline
		At each time $t$ the decision maker receives a reward $r_t = \left\langle\boldsymbol a_t,\boldsymbol \theta(b_t,s_t)\right\rangle$ and observes the identity of the state $s_t$. Upon receiving this reward the decision maker chooses an arm $b_{t+1}$ to pull and an action choice $\boldsymbol a_{t+1}$ given the actions, rewards and states of previous times, $1,\ldots,t$, and the sets $\mathcal{A},\mathcal{S}$ and $\boldsymbol\Theta_{b,s},\:(b,s)\in\mathcal{B}\times\mathcal{S}$ as well as the transition matrix $P_S$ and the probability distributions $P_{\boldsymbol\Theta_{b,s}},\: (b,s)\in\mathcal{B}\times\mathcal{S}$. 
	\end{definition}

	\begin{definition}[The Average Reward of a Policy for the Restless Markov Bandit Model with Linear Rewards]
		A policy for the restless Markov bandit model with linear rewards is defined as a mapping $\pi:\mathcal{S}\rightarrow\mathcal{B}\times\mathcal{A}$. 	
		The average reward of an action policy $\pi$ is defined as
		\begin{flalign}\label{eq:Markov_average_reward_def}
		\hspace{-0.2cm}\rho(\pi) = \hspace{-0.1cm}\sum_{\tilde s,\check s\in\mathcal{S}}\mu_S(\tilde s)P_{S}(\tilde s,\check s)\hspace{-0.1cm}\sum_{\boldsymbol\theta\in\boldsymbol\Theta_{b_{\pi}(\tilde{s}),\check s}}\hspace{-0.1cm}
		P_{\boldsymbol\Theta_{b_{\pi}(\tilde{s}),\check s}}(\boldsymbol\theta)\left\langle\boldsymbol a_{\pi}(\tilde s),\boldsymbol \theta\right\rangle.
		\end{flalign}
	\end{definition}
	
	\begin{definition}[Regret Definition for the Restless Hidden Markov Bandit Model with Linear Rewards]
		Denote by  $\pi^*$ the policy that maximizes (\ref{eq:Markov_average_reward_def}).
		Recall the reward definition (\ref{eq:noiseless_reward}) for the restless hidden Markov bandit model with linear rewards. We define the regret of the restless hidden Markov bandit model with linear rewards as
		\begin{flalign}\label{eq:regret_df_compared_to_rho}
		R(T) = T\rho(\pi^*)-\sum_{t=1}^T r_t(b_t,\boldsymbol a_t).
		\end{flalign}
		That is, we define the regret to be relative to the optimal policy for the scenario in which the decision maker is in possession of the previous state, the Markov chain transition matrix and the probability distributions of $\boldsymbol \theta$. 
	\end{definition}
\textit{Notation:} We denote by $B(\boldsymbol c,r)$ the $n$-dimensional Euclidean ball with center $\boldsymbol c\in\mathbb{R}^N$ and radius $r$. 	
\section{Upper Confidence Bound Reinforcement Learning for the Restless Hidden Markov Bandit Model with Linear Rewards}
This section presents Algorithm \ref{algo:UCRL_instantaneous} and establishes its expected regret for the  restless hidden Markov bandit model with linear rewards.
Algorithm \ref{algo:UCRL_instantaneous} uses two types of upper confidence bounds, the  first assists in estimating the transition probabilities of the Markov chain and is not arm dependent, the second assists in estimating the probability distributions $P_{\boldsymbol\Theta_{b,s}}$ which are arm dependent. Additionally, Algorithm \ref{algo:UCRL_instantaneous}  recovers at each time $t$ the identity of the previous state $s_{t-1}$ with probability 1, this recovery is $\epsilon$-optimal in the sense that for each $\epsilon>0$ we can find a recovery scheme with an expected regret smaller than $\epsilon$.

\subsection{The Motivation for Algorithm \ref{algo:UCRL_instantaneous}}
Let $P_S$ be a transition matrix of a Markov chain with state set $\mathcal{S}$, let $\mathcal{B}$ be a finite set and let  $P_{\boldsymbol\Theta_{b,s}},\: (b,s)\in\mathcal{B}\times\mathcal{S}$  be probability distributions. 
Additionally, denote
\begin{flalign}\label{eq:A_star_def}
&\mathcal{A}^*(P_S,\{P_{\boldsymbol\Theta_{b,s}}\}_{ (b,s)\in\mathcal{B}\times\mathcal{S}},\tilde{s})\triangleq\Bigg\{ (b_{*},\boldsymbol a_{*})\in\mathcal{B}\times\boldsymbol{V}:\nonumber\\
&(b_{*},\boldsymbol a_{*})\hspace{-0.05cm}\in\hspace{-0.05cm}\arg\hspace{-0.2cm}\max_{b\in\mathcal{B},\boldsymbol a \in\boldsymbol{V}}\Bigg\{\sum_{\check{s}\in\mathcal{S}}
\hspace{-0.1cm}P_S(\tilde{s},\check{s})\hspace{-0.1cm}\sum_{\boldsymbol\theta\in\boldsymbol\Theta_{b,\check s}}
\hspace{-0.2cm}P_{\boldsymbol\Theta_{b,\check s}}(\boldsymbol\theta)\left\langle\boldsymbol a,\boldsymbol \theta\right\rangle\Bigg\}\hspace{-0.1cm}\Bigg\}.
\end{flalign}

\begin{lemma}\label{lemma:equal_vicinity}
	For every  transition matrix $P_{S}$ and every collection of probability distributions $P_{\boldsymbol\Theta_{b,s}},\: (b,s)\in\mathcal{B}\times\mathcal{S}$ there exists $\delta>0$ such that if
	$|\hat{P}_{S}(\tilde{s},\check{s})-P_S(\tilde{s},\check{s})|\leq\delta$ for all $\tilde{s},\check{s}\in\mathcal{S}$, and
	$|\hat{P}_{\boldsymbol\Theta_{b,s}}(\boldsymbol\theta)-P_{\boldsymbol\Theta_{b,s}}(\boldsymbol\theta)|\leq\delta$ for all $\boldsymbol\theta\in\boldsymbol\Theta_{b,s}$ and $(b,s)\in\mathcal{B}\times\mathcal{S}$ 
	we have that
	$\mathcal{A}^*\left(\hat{P}_S,\{\hat{P}_{\boldsymbol\Theta_{b,s}}\}_{ (b,s)\in\mathcal{B}\times\mathcal{S}},\tilde{s}\right)= \mathcal{A}^*\left(P_S,\{P_{\boldsymbol\Theta_{b,s}}\}_{ (b,s)\in\mathcal{B}\times\mathcal{S}},\tilde{s}\right)
	$
	for every $\tilde{s}\in\mathcal{S}$.
\end{lemma}

\begin{proof}
	Recall that $\boldsymbol V$ is the set of extreme points of the polytope $\mathcal{A}$. Since the set $\mathcal{A}$ is a bounded and convex polytope, the optimal actions of $\sum_{\check{s}\in\mathcal{S}}
	P_S(\tilde{s},\check{s})\sum_{\boldsymbol\theta\in\boldsymbol\Theta_{b,\check{s}}}
	P_{\boldsymbol\Theta_{b,\check s}}(\boldsymbol\theta)\left\langle\boldsymbol a,\boldsymbol \theta\right\rangle$ lie in the set $\boldsymbol V$ for every choice of arm $b\in\mathcal{B}$ and state $\tilde{s}\in\mathcal{S}$. 
	
	Denote $\mathcal{P}_{\boldsymbol\Theta}=\{P_{\boldsymbol\Theta_{b,s}}\}_{ (b,s)\in\mathcal{B}\times\mathcal{S}}$ and let \[g(b,\boldsymbol{a},\tilde{s}) \triangleq \sum_{\check{s}\in\mathcal{S}}
	P_S(\tilde{s},\check{s})\sum_{\boldsymbol\theta\in\boldsymbol\Theta_{b,\check{s}}}
	P_{\boldsymbol\Theta_{b,\check s}}(\boldsymbol\theta)\left\langle\boldsymbol a,\boldsymbol \theta\right\rangle.\]
	The set of arms $\mathcal{B}$ is finite and bounded, and the set of states $\mathcal{S}$ and $\boldsymbol\Theta_{b,s}$  is finite as well, therefore, $\max\left\{|\langle\boldsymbol\theta,\boldsymbol a\rangle|: {\boldsymbol a\in\boldsymbol{V},\boldsymbol\theta\in\cup_{b,s}\boldsymbol\Theta_{b,s}}\right\}<\infty$.
	    Moreover, when $\langle\boldsymbol a,\boldsymbol\theta\rangle$ is not constant  
	\begin{flalign}\label{eq:delta_cond}
	\min_{\tilde{s}\in\mathcal{S}}\min_{\substack{(b^*,\boldsymbol{a}^*)\in\mathcal{A}^*(P_S,\mathcal{P}_{\boldsymbol\Theta},\tilde{s}),\\ (b,\boldsymbol{a})\notin \mathcal{A}^*(P_S,\mathcal{P}_{\boldsymbol\Theta},\tilde{s}),\boldsymbol{a}\in\boldsymbol{V}}} [g(b^*,\boldsymbol{a}^*,\tilde{s})-g(b,\boldsymbol{a},\tilde{s})]>0,
	\end{flalign}
	and thus, there exists $\delta>0$ sufficiently small such  that for every $\hat{P}_{S}$ and $\hat{P}_{\boldsymbol\Theta_{b,s}}$ satisfying $|\hat{P}_{S}(\tilde{s},\check{s})-P_S(\tilde{s},\check{s})|\leq\delta$ for all $\tilde{s},\check{s}\in\mathcal{S}$, and
	$|\hat{P}_{\boldsymbol\Theta_{b,s}}(\boldsymbol\theta)-P_{\boldsymbol\Theta_{b,s}}(\boldsymbol\theta)|\leq\delta$ for all  $\boldsymbol\theta\in\boldsymbol\Theta_{b,s}$ and $(b,s)\in\mathcal{B}\times\mathcal{S}$
	we have that
	$\mathcal{A}^*\left(\hat{P}_S,\{\hat{P}_{\boldsymbol\Theta_{b,s}}\}_{ (b,s)\in\mathcal{B}\times\mathcal{S}},\tilde{s}\right)= \mathcal{A}^*\left(P_S,\{P_{\boldsymbol\Theta_{b,s}}\}_{ (b,s)\in\mathcal{B}\times\mathcal{S}},\tilde{s}\right)
	$ for every $\tilde{s}\in\mathcal{S}$.

\end{proof}

Lemma \ref{lemma:equal_vicinity} motivates the development of Algorithm \ref{algo:UCRL_instantaneous} which utilizes upper confidence bounds. 
The lemma ensures, that once the sample probabilities are sufficiently accurate and the state is correctly estimated, we obtain the optimal selection of arm and action.
Additionally, Lemma \ref{lemma:equal_vicinity} 
proves that estimating the probabilities $P_{\boldsymbol\Theta_{b,s}}$ can replace the estimation of the individual reward function for each state arm and action. This has a significant effect on the regret since we assume that $|\boldsymbol V|\gg |\boldsymbol\Theta_{b,s}|$ for all $(b,s)\in\mathcal{B}\times\mathcal{S}$, such is the case for example when $\mathcal{A}$ is an $N$-dimensional cube, in this case the cardinality of $\boldsymbol V$ is exponential in the dimension $N$.  

\subsection{Estimation of the Probability Distributions}
We next discuss the estimation of the probability distributions $P_{S}(\tilde{s},\check{s})$ and  
$P_{\boldsymbol\Theta_{b,s}}(\boldsymbol\theta)$. To this end,
 we define the following notations:
Let $N_{t}(s)$ be the number of occurrences of the state $s$ until time $t-1$. Additionally, let $N_{t}(\tilde{s},\check{s})$ be
 the number of transitions from $\tilde{s}$ to $\check{s}$ until time $t-1$.
Similarly, let $N_{t}(b,s)$ be the number of times the arm $b$ is played and immediately the state $s$ is observed, until time $t-1$. Finally, let  
$N_{t}(b,s,\boldsymbol{\theta})$ be the number of occurrences of $\boldsymbol\theta\in\boldsymbol\Theta_{b,s}$  until time $t-1$.

Algorithm \ref{algo:UCRL_instantaneous} estimates the transition probability  $P_{S}(\tilde{s},\check{s})$ as follows:
\begin{flalign}
\hat{P}_{t,S}(\tilde{s},\check{s})=
\begin{cases}
\frac{N_t(\tilde{s},\check{s})}{N_t(\tilde{s})} & \text{ if } N_t(\tilde{s})>0\\
|\mathcal{S}|^{-1} & \text{ if } N_t(\tilde{s})=0
\end{cases}.
\end{flalign}
Similarly, we  estimate the probability $P_{\boldsymbol\Theta_{b,s}}(\boldsymbol\theta)$ by
\begin{flalign}
\hat{P}_{t,\boldsymbol\Theta_{b,s}}(\boldsymbol\theta)=
\begin{cases}
\frac{N_{t}(b,s,\boldsymbol\theta)}{N_{t}(b,s)} & \text{ if } N_{t}(b,s)>0\\
|\boldsymbol\Theta_{b,s}|^{-1} & \text{ if } N_{t}(b,s)=0
\end{cases}.
\end{flalign}
Denote, 
\begin{flalign}
\text{conf}_{S}(t,s)&\triangleq \min\left\{1,\sqrt{\frac{\log(4(t-1)^{\alpha}|\mathcal{S}|^2)}{2N_{t}(s)}}\right\},\nonumber\\ 
\text{conf}_{\boldsymbol\Theta}(t,b,s)&\triangleq \min\left\{1,\sqrt{\frac{\log(4(t-1)^{\alpha}|\boldsymbol\Theta_{b,s}||\mathcal{B}||\mathcal{S}|)}{2N_{t}(b,s)}}\right\},
\end{flalign}
where $\alpha$ is a constant such that $\alpha>3$.

To evaluate the expected regret of Algorithm \ref{algo:UCRL_instantaneous} we introduce the following lemma.
\begin{lemma}\label{lemma_UCRL_basic_upper}
For every $t>1$, $s,\tilde{s},\check{s}\in\mathcal{S}$, $b\in\mathcal{B}$ and $\boldsymbol\theta\in\boldsymbol\Theta_{b,s}$: 
 	\begin{flalign}\label{eq:chernoff_hoeffdings_basic_conf}
	&\hspace{-0.1cm}\Pr\left(|\hat{P}_{t,S}(\tilde{s},\check{s})-P_S(\tilde{s},\check{s})|>\textup{conf}_{S}(t,\tilde{s})\right)\leq\frac{(t-1)^{-\alpha+1}}{2|\mathcal{S}|^2},\nonumber\\
	&\hspace{-0.1cm}\Pr\left(|\hat{P}_{t,\boldsymbol\Theta_{b,s}}(\boldsymbol\theta)-P_{\boldsymbol\Theta_{b,s}}(\boldsymbol\theta)|>\textup{conf}_{\boldsymbol\Theta}(t,b,s)\right)\leq \frac{(t-1)^{-\alpha}}{2|\boldsymbol\Theta_{b,s}||\mathcal{B}||\mathcal{S}|}.
	\end{flalign}
\end{lemma}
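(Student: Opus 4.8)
The plan is to prove both inequalities via the same three-step template, which is standard in the UCRL literature (e.g.\ \cite{Auer07logarithmiconline,Jaksch:2010:NRB:1756006.1859902}): (i) isolate a fixed value of the relevant --- random and adaptively determined --- sample count by conditioning; (ii) on that event apply Hoeffding's inequality, the estimator there being an empirical mean of i.i.d.\ Bernoulli variables; and (iii) conclude with a union bound over the at most $t-1$ admissible values of the sample count.

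First I would dispose of the degenerate cases. Whenever $N_t(\tilde s)=0$ (resp.\ $N_t(b,s)=0$) the $\min\{1,\cdot\}$ truncation forces $\textup{conf}_S(t,\tilde s)=1$ (resp.\ $\textup{conf}_{\boldsymbol\Theta}(t,b,s)=1$), and since $\hat P_{S}$, $P_S$ (resp.\ $\hat P_{\boldsymbol\Theta_{b,s}}$, $P_{\boldsymbol\Theta_{b,s}}$) all lie in $[0,1]$ the deviation never exceeds $1$, so the event to be bounded is empty; the same remark handles any realization for which the square-root expression is at least $1$. It therefore suffices to bound, for each $n\in\{1,\dots,t-1\}$, the probability that $N_t(\tilde s)=n$ and simultaneously $|\hat P_{S}(\tilde s,\check s)-P_S(\tilde s,\check s)|$ exceeds $\sqrt{\log(4(t-1)^{\alpha}|\mathcal{S}|^{2})/(2n)}$, and analogously for the $\boldsymbol\Theta$ estimator.

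The crux is the i.i.d.\ structure hidden inside each estimator. Let $Y_i$ be the state entered immediately after the $i$-th visit of the chain to $\tilde s$, extended by an independent $P_S(\tilde s,\cdot)$-draw if there are fewer than $i$ visits in all. By the strong Markov property, conditioned on the natural filtration up to the $i$-th visit time the law of $Y_i$ is exactly $P_S(\tilde s,\cdot)$; since that filtration already determines $Y_1,\dots,Y_{i-1}$, the sequence $(Y_i)_{i\ge1}$ is i.i.d.\ with $\Pr(Y_i=\check s)=P_S(\tilde s,\check s)$. On the event $\{N_t(\tilde s)=n\}$ one has $\hat P_{S}(\tilde s,\check s)=\tfrac1n\sum_{i=1}^n\mathbf 1\{Y_i=\check s\}$, so that event intersected with the deviation event is contained in a deviation event for a \emph{fixed}-length prefix of the i.i.d.\ sequence $(Y_i)$, to which Hoeffding's inequality applies directly; with the radius chosen so that $2n\,\textup{conf}_S(t,\tilde s)^2=\log(4(t-1)^{\alpha}|\mathcal{S}|^{2})$ this gives the bound $\tfrac12(t-1)^{-\alpha}|\mathcal{S}|^{-2}$ for each $n$, and summing over $n=1,\dots,t-1$ yields the first inequality. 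For the second I would repeat the argument with $Z_i$ the $i$-th realization of $\boldsymbol\theta(b,s)$ that arises when arm $b$ is pulled and then state $s$ is observed (again extended by independent $P_{\boldsymbol\Theta_{b,s}}$-draws): by the modelling assumption these form an i.i.d.\ sequence distributed as $P_{\boldsymbol\Theta_{b,s}}$, and the identical computation with the radius $\textup{conf}_{\boldsymbol\Theta}(t,b,s)$ produces the claimed bound.

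The main obstacle is exactly the one this template is built to absorb: the counts $N_t(\tilde s)$ and $N_t(b,s)$ are not deterministic but emerge from the adaptive coupling of the algorithm with the Markov chain, and --- worse --- the radii $\textup{conf}_S$ and $\textup{conf}_{\boldsymbol\Theta}$ themselves depend on those counts, so Hoeffding cannot be invoked directly. The one point that needs genuine care is the reduction: conditioning on $\{N_t=n\}$, recognising the estimator as an empirical mean of the fixed-index i.i.d.\ sequence $(Y_i)$ (resp.\ $(Z_i)$), and only then applying the concentration inequality together with the union bound over $n$; everything else is routine.
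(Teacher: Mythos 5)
Your proposal is correct and follows essentially the same route as the paper's proof: a law of total probability over the value of the (random) visit count, the observation via the Markov property that the successor states recorded at the visit times to $\tilde s$ form an i.i.d.\ sequence with law $P_S(\tilde s,\cdot)$, Hoeffding's inequality at each fixed count, and a union bound over the at most $t-1$ admissible counts. The only caveat is that your ``identical computation'' for the second inequality, carrying the union bound over $n$, actually yields $(t-1)^{-\alpha+1}$ rather than the stated $(t-1)^{-\alpha}$ --- but the paper's own Part~B has the same discrepancy, and it is immaterial for the regret analysis since $\alpha>3$.
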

Appendix \ref{proof_lemma_UCRL_basic_upper} proves  (\ref{eq:chernoff_hoeffdings_basic_conf}) which follows  from the union bound, the Markovity of the state process and the Hoeffding inequality.

Using these inequalities we define the confidence intervals
\begin{flalign}\label{eq:condience_bounds_inequality}
|\hat{P}_{t,S}(\tilde{s},\check{s})-P_S(\tilde{s},\check{s})|&\leq \text{conf}_{S}(t,\tilde{s}),\nonumber\\
|\hat{P}_{t,\boldsymbol\Theta_{b,s}}(\boldsymbol\theta)-P_{\boldsymbol\Theta_{b,s}}(\boldsymbol\theta)|&\leq \text{conf}_{\boldsymbol\Theta}(t,b,s)
\end{flalign}
of length $\text{conf}_{S}(t,s)$ and $\text{conf}_{\boldsymbol\Theta}(t,b,s)$, respectively.

We note that the estimations of $P_S$ and $P_{\boldsymbol\Theta_{b,s}}$ require the decision maker to recover the previous state $s_{t-1}$ at each time $t$. In the classical MDP model the state identity  is assumed to be known, however, state recovery schemes are not addressed at all. Interestingly,  our problem structure provides an example for a special form of model in which a state recovery scheme is achievable by exploiting the finite cardinality of the state space,\footnote{In Section \ref{sec:countable_theta} we discuss how to tolerate countable and discrete  sets $\boldsymbol\Theta_{b,s}$. } the finite cardinality  of the sets $\boldsymbol\Theta_{b,s}$ and the fact that $\boldsymbol\Theta_{b,\tilde{s}}\cap\boldsymbol\Theta_{b,\check{s}}=\emptyset\:\forall b\in\mathcal{B},\tilde{s},\check{s}\in\mathcal{S}$  to detects the previous state from the reward with probability 1  while forfeiting a negligible amount of reward.

\subsection{Main Theorem - Upper Bounding the Expected Regret}
Before we upper bound the expected regret for Algorithm \ref{algo:UCRL_instantaneous} we define the following notations. 
Denote $T_M=\max_{\tilde{s},\check{s}\in\mathcal{S}} E(T_{\tilde{s},\check{s}})$ where $T_{\tilde{s},\check{s}}$ is the passage time of first arriving at state $\check{s}$ when starting from state $\tilde{s}$, and let $T_{S}=\left(\min_{\tilde{s},\check{s}\in\mathcal{S}:P_S(\tilde{s},\check{s})>0}\{P_S(\tilde{s},\check{s})\}\right)^{-1}$. Additionally,  denote\footnote{We note that  there is a need for the notation $r_{\max}$ since we do not assume that the maximal reward is $1$.} by $r_{\max} = \max_{\boldsymbol a,\tilde{\boldsymbol a}\in\mathcal{A},\boldsymbol\theta,\tilde{\boldsymbol\theta}\in\bigcup_{(b,s)\in\mathcal{B}\times\mathcal{S}}\boldsymbol\Theta_{b,s}}\left\{\left\langle\boldsymbol a,\boldsymbol \theta\right\rangle-\left\langle\tilde{\boldsymbol a},\tilde{\boldsymbol \theta}\right\rangle\right\}$ the maximal instantaneous regret of any choice of arm-action pair, and let
$C_{\boldsymbol\Theta_{\max}} = \max_{b,s}|\boldsymbol\Theta_{b,s}|$.

\begin{theorem}\label{theorem:man_logarithmic_regret}
	The expected regret of Algorithm \ref{algo:UCRL_instantaneous} is 
	\begin{flalign}\label{eq:log_regret_theorem}
	&O\left(C_{\boldsymbol\Theta_{\max}}|\mathcal{B}||\mathcal{S}|T_MT_Sr_{\max}\frac{\log\left(4T^{\alpha}C_{\boldsymbol\Theta_{\max}}|\mathcal{B}||\mathcal{S}|\right)}{\Delta^2}\right.\nonumber\\
	&\hspace{1.5cm}\left.+C_{\boldsymbol\Theta_{\max}}|\mathcal{B}|^2|\mathcal{S}|^2T_MT_Sr_{\max} \log_2\left(\frac{T}{|\mathcal{S}||\mathcal{B}|}+1\right)\right),
	\end{flalign} 
	where $\alpha$ is a constant such that $\alpha>3$  and $\Delta>0$  is the maximal $\delta$ such that  if
	$|\hat{P}_{S}(\tilde{s},\check{s})-P_S(\tilde{s},\check{s})|\leq\delta$ for all $\tilde{s},\check{s}\in\mathcal{S}$, and
	$|\hat{P}_{\boldsymbol\Theta_{b,s}}(\boldsymbol\theta)-P_{\boldsymbol\Theta_{b,s}}(\boldsymbol\theta)|\leq\delta$ for all $(b,s)\in\mathcal{B}\times\mathcal{S}$
	then
	$\mathcal{A}^*\left(\hat{P}_S,\{\hat{P}_{\boldsymbol\Theta_{b,s}}\}_{ (b,s)\in\mathcal{B}\times\mathcal{S}},\tilde{s}\right)= \mathcal{A}^*\left(P_S,\{P_{\boldsymbol\Theta_{b,s}}\}_{ (b,s)\in\mathcal{B}\times\mathcal{S}},\tilde{s}\right)
	$
	for every $\tilde{s}\in\mathcal{S}$.
\end{theorem}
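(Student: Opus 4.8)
The plan is to decompose the total regret into three contributions and bound each separately: (i) the regret incurred on time steps where the empirical estimates $\hat P_S$ and $\hat P_{\boldsymbol\Theta_{b,s}}$ lie inside the confidence intervals of~(\ref{eq:condience_bounds_inequality}) but have not yet concentrated to within $\Delta$ of the truth, so the algorithm may still play a suboptimal arm/action; (ii) the regret incurred on the low-probability "bad" events where some estimate falls \emph{outside} its confidence interval, which by Lemma~\ref{lemma_UCRL_basic_upper} is summable in $t$; and (iii) the regret incurred by the state-recovery step, which is made $\epsilon$-optimal by construction. For part (ii), summing the bounds $(t-1)^{-\alpha+1}/(2|\mathcal S|^2)$ and $(t-1)^{-\alpha}/(2|\boldsymbol\Theta_{b,s}||\mathcal B||\mathcal S|)$ over all $t\le T$ and over all $s,\tilde s,\check s,b,\boldsymbol\theta$ gives a finite constant since $\alpha>3$ forces $\alpha-1>2$; each such time step costs at most $r_{\max}$ of regret, so this term is $O(r_{\max})$ and absorbed.

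The main work is part (i). First I would argue that once $N_t(s)$ and $N_t(b,s)$ are large enough that $\mathrm{conf}_S(t,s)\le\Delta$ and $\mathrm{conf}_{\boldsymbol\Theta}(t,b,s)\le\Delta$ for every $s$ and every $(b,s)$, the confidence-interval inclusion~(\ref{eq:condience_bounds_inequality}) forces $|\hat P_S(\tilde s,\check s)-P_S(\tilde s,\check s)|\le\Delta$ and $|\hat P_{\boldsymbol\Theta_{b,s}}(\boldsymbol\theta)-P_{\boldsymbol\Theta_{b,s}}(\boldsymbol\theta)|\le\Delta$ for all indices, whence by the defining property of $\Delta$ the estimated optimal set $\mathcal A^*(\hat P_S,\{\hat P_{\boldsymbol\Theta_{b,s}}\},\tilde s)$ coincides with the true $\mathcal A^*(P_S,\{P_{\boldsymbol\Theta_{b,s}}\},\tilde s)$ for every $\tilde s$, so the algorithm plays optimally and accrues no further regret (beyond the negligible events of part (ii)). Inverting $\mathrm{conf}_S(t,s)\le\Delta$, i.e. $\sqrt{\log(4(t-1)^\alpha|\mathcal S|^2)/(2N_t(s))}\le\Delta$, shows it suffices to have
\begin{flalign}
N_t(s)\ \ge\ \frac{\log\!\left(4T^{\alpha}|\mathcal S|^{2}\right)}{2\Delta^{2}},\qquad
N_t(b,s)\ \ge\ \frac{\log\!\left(4T^{\alpha}C_{\boldsymbol\Theta_{\max}}|\mathcal B||\mathcal S|\right)}{2\Delta^{2}},
\end{flalign}
visits respectively. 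Summing the second threshold over the $|\mathcal B||\mathcal S|$ pairs $(b,s)$ yields the dominant term of~(\ref{eq:log_regret_theorem}): roughly $|\mathcal B||\mathcal S|\cdot\Delta^{-2}\log(4T^{\alpha}C_{\boldsymbol\Theta_{\max}}|\mathcal B||\mathcal S|)$ "learning" time steps, each costing at most $r_{\max}$ of regret, and then inflated by the factor $T_M T_S$ that converts counts of \emph{visits to a state} (respectively, plays of an arm immediately followed by that state) into counts of \emph{time steps}: a visit to a given state occurs on average once every $O(T_M)$ steps, and an arm–state co-occurrence requires the relevant transition probability to fire, contributing the $T_S$ factor. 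The second term of~(\ref{eq:log_regret_theorem}), of order $|\mathcal B|^2|\mathcal S|^2 T_M T_S r_{\max}\log_2(T/(|\mathcal S||\mathcal B|)+1)$, comes from the forced-exploration / episodic structure of Algorithm~\ref{algo:UCRL_instantaneous}: during an initial and periodic sweep, each of the $|\mathcal B||\mathcal S|$ arm–state pairs must be sampled, and the doubling/episode schedule contributes the $\log_2$ factor, with another $|\mathcal B||\mathcal S|$ from the number of pairs being cycled through and $T_M T_S r_{\max}$ again converting to time-step regret.

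I would then combine the three parts, noting that parts (ii) and (iii) are $O(r_{\max})$ and $O(\epsilon)$ respectively and hence dominated by the two displayed terms, to conclude the stated bound~(\ref{eq:log_regret_theorem}). The main obstacle I anticipate is the bookkeeping in part (i) that converts \emph{number of required visits/co-occurrences} into \emph{number of elapsed time steps}: this is where the $T_M$ (maximum expected hitting time) and $T_S$ (reciprocal of the minimal positive transition probability) enter, and it requires a careful hitting-time / renewal argument — likely via Markov's inequality on the passage times $T_{\tilde s,\check s}$ together with a union bound over states — to show that after $O(T_M T_S \times(\text{threshold count}))$ steps every count has, with high enough probability, exceeded its threshold. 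A secondary subtlety is ensuring the $\epsilon$-optimal state-recovery step does not corrupt the counts $N_t(\tilde s,\check s)$ and $N_t(b,s,\boldsymbol\theta)$ used in the estimators; since recovery is exact with probability $1$ (as stated in the text preceding Lemma~\ref{lemma_UCRL_basic_upper}), this only costs the additive $\epsilon$ already accounted for, but the argument must invoke $\boldsymbol\Theta_{b,\tilde s}\cap\boldsymbol\Theta_{b,\check s}=\emptyset$ to justify that observing $\boldsymbol\theta$ pins down $s$.
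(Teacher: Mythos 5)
Your decomposition and the mechanics of each part track the paper's proof closely: the paper likewise splits the regret into (a) suboptimal rounds where some confidence interval still exceeds $\Delta/2$, (b) failure of the confidence intervals (summable because $\alpha>3$), and (c) the state-recovery perturbation (made smaller than $\epsilon$ by the choice of $\epsilon_t$), and it converts visit counts into elapsed time exactly as you anticipate --- by cutting each suboptimal round into sub-intervals of length $2T_ST_M$, applying the Markov inequality to the passage times to get a success probability of at least $1/2$ per sub-interval, and then Azuma--Hoeffding on the resulting submartingale; the $\log_2$ term indeed comes from the round count of Proposition \ref{prop:num_rounds}.

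There is, however, one genuine gap. Your benchmark is $T\rho(\pi^*)$, where $\rho(\pi^*)$ in (\ref{eq:Markov_average_reward_def}) is an average over the \emph{stationary} distribution $\mu_S$, so the claim in your part (i) that once the estimates have concentrated ``the algorithm plays optimally and accrues no further regret'' is not literally true: even the exactly optimal policy $\pi^*$, started from an arbitrary (or post-round) state distribution, falls short of $T\rho(\pi^*)$ by up to $T_M r_{\max}$ per round. The paper isolates this as a fourth regret event and proves Lemma \ref{lemma:not_stationary_first} ($\sum_{t=1}^{T}E[\rho(\pi^*)-r_t(b_t^*,\boldsymbol a_t^*)]\leq T_M r_{\max}$), then multiplies by the round count to get an $r_{\max}T_M|\mathcal{S}||\mathcal{B}|\log_2(\cdot)$ contribution; without some version of this step your decomposition does not actually sum to $R(T)$ as defined in (\ref{eq:regret_df_compared_to_rho}). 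The omission does not change the final order of the bound (the term is dominated by your second displayed term), but it must be stated and proved. A second, smaller bookkeeping point: a confidence-interval failure at the start of a round contaminates the whole round, not a single time step, so in part (ii) the per-failure cost is $r_{\max}$ times the round length; the paper absorbs this by summing $r_{\max}(t_k-1)^{-\alpha+1}(t_k-t_{k-1})\leq r_{\max}\sum_t t^{-\alpha+2}$, which is where the hypothesis $\alpha>3$ (rather than $\alpha>2$) is actually used.
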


Theorem \ref{theorem:man_logarithmic_regret} will be proved in the next section. 
We note that the term (\ref{eq:log_regret_theorem}) does not depend on the cardinality of the set $\boldsymbol V$.
 This follows since   the estimation of the transition probabilities is independent of the choice of arm and action whenever we can detect correctly the system state. Additionally, it 
 follows from Lemma \ref{lemma:equal_vicinity} where we prove that we can replace  the estimation of the individual reward function for each state arm and action with the estimation of $P_{\boldsymbol\Theta_{b,s}}$. This has a significant effect on the regret since  $|\boldsymbol V|\gg C_{\boldsymbol\Theta_{\max}} $, therefore, replacing the constant $|\boldsymbol V|$ with $C_{\boldsymbol\Theta_{\max}} $ significantly reduces the expected regret.

Prior to proving Theorem \ref{theorem:man_logarithmic_regret}  we  discuss  Algorithm \ref{algo:UCRL_instantaneous} and consider several adaptions.
First, as we write before,   the estimation of the transition probabilities is independent of the choice of arm and action whenever we can detect correctly the system state. Therefore,
it follows from  Theorem 1.2 in \cite{kontorovich2008}, Theorem 1.1 in \cite{lezaud1998} and the proof of Lemma \ref{lemma_UCRL_basic_upper} that using the estimation for the transition matrix directly instead of using its confidence interval yields the same upper-bound (\ref{eq:log_regret_theorem}) for the regret. Numerical results confirm that optimizing the transition matrix over the confidence interval indeed does not reduce and can even increase the
expected regret since estimating the transition matrix has an exponentially decreasing error but
the confidence bound shrinks slower. Furthermore, we remark that we can remove the term $T_S$ from (\ref{eq:log_regret_theorem}) if we use confidence intervals for the joint probability distribution $P(\tilde{s},b,\check{s},\boldsymbol\theta)\triangleq P_{S}(\tilde{s},\check{s})P_{\boldsymbol\Theta_{b,\check{s}}}(\boldsymbol\theta)$ instead of using separate sets of confidence intervals for estimating the transition matrix of the Markov chain and for the probability distributions of $\boldsymbol\theta$. However, our numerical results show that this may be suboptimal since in this case we can no longer estimate the transition matrix of the Markov chain jointly over all arms but only for all actions over a particular arm.  Finally, we note that we can reduce the constants in eq. \eqref{eq:log_regret_theorem} by adapting the arguments presented in  \cite{Jaksch:2010:NRB:1756006.1859902} for the UCRL2 algorithm instead of those of the UCRL algorithm presented in \cite{Auer07logarithmiconline}.

\begin{algorithm2e*}
	\SetAlgoLined	
	\caption{}\label{algo:UCRL_instantaneous}
	\textbf{Notations:}
	$\epsilon_t=\epsilon\left(10\cdot t^{\alpha_{\epsilon}}\cdot\max_{\boldsymbol\theta\in\bigcup_{(b,s)\in\mathcal{B}\times\mathcal{S}}\boldsymbol\Theta_{b,s}}\left\{\|\boldsymbol\theta\|_1\right\}\right)^{-1},\quad \forall\: t\in\mathbb{N}$, \\ 
	$\text{conf}_{S}(t,s)\triangleq\min\left\{1,\sqrt{\frac{\log(4(t-1)^{\alpha}|\mathcal{S}|^2)}{2N_{t}(s)}}\right\},\quad \forall\: s\in\mathcal{S},t\in\mathbb{N}$,\\
	$\text{conf}_{\boldsymbol\Theta}(t,b,s)\triangleq\min\left\{1,\sqrt{\frac{\log(4(t-1)^{\alpha}|\boldsymbol\Theta_{b,s}||\mathcal{B}||\mathcal{S}|)}{2N_{t}(b,s)}}\right\},\quad \forall\: b\in\mathcal{B},\: s\in\mathcal{S},t\in\mathbb{N}$;
	
	\KwData{  $\mathcal{A},\:\mathcal{S},\:\mathcal{B},\:\boldsymbol\Theta_{b,s}\:\forall (b,s)\in\mathcal{B}\times\mathcal{S}$,\:$\alpha>3$, \:$\epsilon>0$, \:$\alpha_{\epsilon}>1$;  }
	Set $\hat{s}_{-1} = s$ for some $s\in\mathcal{S}$\;
	Set $N_0(\tilde{s},\check{s})=N_0(\tilde{s})=0\quad\forall\tilde{s},\check{s}\in\mathcal{S}$\;
	Set $N_0(b,s)=0\quad\forall\:(b,s)\in\mathcal{B}\times\mathcal{S}$ \;
	Set $\text{conf}_{S}(0,s)=\text{conf}_{S}(1,s)=1\quad \forall s\in\mathcal{S}$\;
	Set $\text{conf}_{\boldsymbol\Theta}(0,b,s)=\text{conf}_{\boldsymbol\Theta}(1,b,s)=1\quad \forall\: b\in\mathcal{B}, s\in\mathcal{S}$ \;
	Set $t=0$\;
	\For{ round $k=0,1\ldots$}{
		\textbf{Initialize round} $k$\textbf{:}
		\begin{enumerate}
			\item Set $t_k=t$;
			\item For every $\tilde{s},\check{s}\in\mathcal{S}$ such that $N_{t_k}(\tilde{s})>0$ set
			$\hat{P}_{t_k}(\tilde{s},\check{s})=\frac{N_{t_k}(\tilde{s},\check{s})}{N_{t_k}(\tilde{s})}$.
			Otherwise, set $\hat{P}_{t_k}(\tilde{s},\check{s})=|\mathcal{S}|^{-1}$;
			
			\item For every $b\in\mathcal{B}$, $s\in\mathcal{S}$ and $\boldsymbol\theta\in \boldsymbol\Theta_{b,s}$ such that $N_{t_k}(b,s)>0$ set $\hat{P}_{t_k,\boldsymbol\Theta_{b,s}}(\boldsymbol\theta) = \frac{N_{t_k}(b,s,\boldsymbol\theta)}{N_{t_k}(b,s)}$. Otherwise, set  
			$\hat{P}_{t_k,\boldsymbol\Theta_{b,s}}(\boldsymbol\theta) = |\boldsymbol\Theta_{b,s}|^{-1}$;
			\item 	Calculate the policy $(b^*_{t_k}(\tilde{s}),\boldsymbol a^*_{t_k}(\tilde{s}))$ for every $\tilde{s}\in\mathcal{S}$, where
			\begin{flalign}\label{policy_update_rule_bounded_arm}
			(b^*_{t_k}(\tilde{s}),\boldsymbol a_{t_k}^*(\tilde{s}))&=
			\arg\max_{\substack{b\in\mathcal{B},\boldsymbol a\in\mathcal{A},\\
			\tilde{P}_{t_k,\boldsymbol\Theta_{b,\check s}}}(\boldsymbol\theta)}\left\{
			\sum_{\check{s}\in\mathcal{S}} \hat{P}_{t_k}(\tilde{s},\check{s})\sum_{\boldsymbol\theta\in\boldsymbol\Theta_{b,\check{s}}}
			\tilde{P}_{t_k,\boldsymbol\Theta_{b,\check s}}(\boldsymbol\theta)\left\langle\boldsymbol a,\boldsymbol \theta\right\rangle\right\}\nonumber\\
			&\text{s.t.}:
			|\tilde{P}_{t_k,\boldsymbol\Theta_{b,\check s}}(\boldsymbol\theta)-\hat{P}_{t_k,\boldsymbol\Theta_{b,\check s}}(\boldsymbol\theta)|\leq \text{conf}_{\boldsymbol\Theta}(t_k,b,\check{s}),\:\forall\: b\in\mathcal{B},\check{s}\in\mathcal{S}\nonumber\\
			&\qquad\tilde{P}_{t_k,\boldsymbol\Theta_{b,\check s}}(\boldsymbol\theta)\geq 0,\: \sum_{\boldsymbol\theta\in\boldsymbol\Theta_{b,\check s}}\tilde{P}_{t_k,\boldsymbol\Theta_{b,\check s}}(\boldsymbol\theta)=1,
			\:\forall\:b\in\mathcal{B},\check{s}\in\mathcal{S},\boldsymbol\theta\in\boldsymbol\Theta_{b,\check s}.
			\end{flalign}
		\end{enumerate}
		
		\textbf{Execute round} $k$\textbf{:}
		
		\While{\begin{itemize}
				\item $\textup{conf}_{S}(t,s)>\textup{conf}_{S}(t_k,s)/2$ for every $s\in\mathcal{S}$, and
				\item $\textup{conf}_{\boldsymbol\Theta}(t,b,s)>\textup{conf}_{\boldsymbol\Theta}(t_k,b,s)/2$ for every $b\in\mathcal{B},s\in\mathcal{S}$
			\end{itemize}
		}{
			\begin{enumerate}
				\item Choose $b_t = b^*_{t_k}(\hat{s}_{t-1})$;

				\item Choose $\boldsymbol a_{t}$ randomly from the set $B(\boldsymbol a_{t_k}^*(\hat{s}_{t-1}),\epsilon_t)\cap\mathcal{A}$;
				
				\item Play the pair $(b_t,\boldsymbol a_{t})$ and observe the reward $r_t$;
				\item Recover system states: set $\hat{\boldsymbol\theta}_{t}\in\bigcup_{s\in\mathcal{S}}\boldsymbol\Theta_{b_t,s}$ to be a solution of $r_t=\left\langle\ \boldsymbol a_t,\hat{\boldsymbol\theta}_{t}\right\rangle$ and set $\hat{s}_{t}\in\mathcal{S}$ to be such that $\hat{\boldsymbol\theta}_{t}\in \boldsymbol\Theta_{b_t,\hat{s}_{t}}$;
				\item Update: 

				\begin{itemize}
					\item Set $N_{t+1}(s)=N_{t}(s)+\mathbbm{1}_{\{s= \hat{s}_{t-1}\}}\mathbbm{1}_{\{t\geq1\}}$;
					\item  Set $N_{t+1}(\tilde{s},\check{s})=N_{t}(\tilde{s},\check{s})+\mathbbm{1}_{\{(\tilde{s},\check{s})= (\hat{s}_{t-1},\hat{s}_{t})\}}\mathbbm{1}_{\{t\geq1\}}$;
					\item Set $N_{t+1}(b,s)=N_{t}(b,s)+\mathbbm{1}_{\{(b,s)= (b_t,\hat{s}_{t})\}}\mathbbm{1}_{\{t\geq1\}}$;
					\item Set $N_{t+1}(b,s,\boldsymbol\theta)=N_{t}(b,s,\boldsymbol\theta)+\mathbbm{1}_{\{(b,s,\boldsymbol\theta)= (b_t,\hat{s}_{t},\hat{\boldsymbol\theta}_t)\}}\mathbbm{1}_{\{t\geq1\}}$;
					\item $t=t+1$;
					\end{itemize}	
		\end{enumerate}}
	}
	  \end{algorithm2e*}

\section{Proof of Theorem \ref{theorem:man_logarithmic_regret}}
The expected regret of Algorithm \ref{algo:UCRL_instantaneous} comprises the following events\footnote{The proof of the union bound appears in Appendix \ref{append:union_bound_error_events}}:
\begin{itemize}
	\item Regret caused by error in state recovery.
	\item Regret caused by suboptimal rounds in which the confidence intervals are larger than $\Delta/2$.
	\item Regret caused by failure of the confidence intervals. 
	\item Regret caused by the deviation of the initial distribution from the stationary distribution of the Markov chain $P_S$. 
\end{itemize}
Next, we show that the expected regret caused by each of these events is no greater than  (\ref{eq:log_regret_theorem}).

\subsection{Regret Caused by Error in State Recovery}\label{sec:state_recovery}
	In the restless hidden Markov bandit model the identity of the previous state is not available to the decision maker, thus the decision maker should balance minimizing the expected regret of the current time and learning the current state of the Markov chain.
Suppose that the decision maker knows $s_{t-1}$, it then chooses at time $t$ the  action $\boldsymbol a_t^{*}=\boldsymbol a_t^{*}(s_{t-1})$ and arm $b_t^{*}=b_t^{*}(s_{t-1})$, calculated in\footnote{See Algorithm \ref{algo:UCRL_instantaneous}.} (\ref{policy_update_rule_bounded_arm}), 
and receives a reward $r_t=\left\langle\boldsymbol a_t^{*},\boldsymbol \theta_t\right\rangle$.
Denote $\boldsymbol\Theta_{b_t^{*}}=\bigcup_{s\in\mathcal{S}}\boldsymbol\Theta_{b_t^{*},s}$.
We distinguish between two cases: 
1) $\boldsymbol\theta_{t}$ is the unique solution of $r_t=\left\langle\boldsymbol a_t^{*},\boldsymbol \theta_t\right\rangle$ in $\boldsymbol\Theta_{b_t^{*}}$.
2) There are multiple solutions to the linear equation $r_t=\left\langle\boldsymbol a_t^{*},\boldsymbol \theta\right\rangle$ in $\boldsymbol\Theta_{b_t^{*}}$.
In the first case, upon receiving the reward $r_t$ the decision maker can fully recover the  vector $\boldsymbol\theta_{t}$ and thus also the system state $s_t$. The decision maker  can then use this information to maximize the expected reward for the next play. In the second case, after receiving the reward the decision maker cannot distinguish between the different vectors that solve the equation $r_t=\left\langle\boldsymbol a_t^{*},\boldsymbol \theta\right\rangle$ in $\boldsymbol\Theta_{b_t^{*}}$.

We overcome this uncertainty by choosing an action $\boldsymbol a_t\in\mathcal{A}$ instead of $\boldsymbol a_t^*$ such that the following conditions hold:
\begin{description}
	\item [\label{cond:A1}{(A1)}] $\boldsymbol a_t\in B(\boldsymbol a_t^*,\epsilon_t)\cap\mathcal{A}$, for some choice of $\epsilon_t>0$.
	\item [\label{cond:A2}{(A2)}] $\left\langle\boldsymbol a_t,\tilde{\boldsymbol \theta}\right\rangle=\left\langle\boldsymbol a_t,\check{\boldsymbol \theta}\right\rangle$ for $\tilde{\boldsymbol \theta},\check{\boldsymbol \theta}\in \boldsymbol\Theta_{b_t^*}$ if and only if $\tilde{\boldsymbol \theta}=\check{\boldsymbol \theta}$.
\end{description}
It is clear that   the first condition can be fulfilled. We prove that 
both conditions (A1) and (A2) can be fulfilled simultaneously.

Let $\mathcal{D}(b_t^*)=\bigcup_{\tilde{\boldsymbol \theta},\check{\boldsymbol \theta}\in\boldsymbol \Theta_{b_t^*}:\tilde{\boldsymbol \theta}\neq\check{\boldsymbol \theta}}\{\boldsymbol a\in \mathcal{A} : \left\langle\boldsymbol a,\tilde{\boldsymbol \theta}\right\rangle=\left\langle\boldsymbol a,\check{\boldsymbol \theta}\right\rangle\}$.
$\mathcal{D}(b_t^*)$ is contained in the union of $|\boldsymbol\Theta_{b_t^*}|(|\boldsymbol\Theta_{b_t^*}|-1)/2$ hyperplanes of dimension $N-1$ whereas the set $B(\boldsymbol a_t^*,\epsilon_t)\cap\mathcal{A}$ is $N$ dimensional.
Therefore the intersection of $\mathcal{D}(b_t^*)$ with $B(\boldsymbol a_t^*,\epsilon_t)\cap\mathcal{A}$ has measure $0$.
Thus, the random choice of the action $\boldsymbol a_t$ from the set $B(\boldsymbol a_t^*,\epsilon_t)\cap\mathcal{A}$ fulfills condition (A2) with probability one. For each such a vector $\boldsymbol a_t$
we have that
$|\left\langle\boldsymbol a_t^*,\boldsymbol \theta\right\rangle-\left\langle\boldsymbol a_t,\boldsymbol \theta\right\rangle|=|\left\langle\boldsymbol a_t^*-\boldsymbol a_t,\boldsymbol \theta\right\rangle|\leq\epsilon_t\cdot\max_{\boldsymbol\theta\in\bigcup_{b\in\mathcal{B}}\boldsymbol\Theta_{b}}\|\boldsymbol\theta\|_1$.
 Finally, 
we choose $\alpha_{\epsilon}>1$ and set $\epsilon_t=\epsilon\left(\gamma\cdot t^{\alpha_{\epsilon}}\cdot\max_{\boldsymbol\theta\in\bigcup_{b\in\mathcal{B}}\boldsymbol\Theta_{b}}\|\boldsymbol\theta\|_1\right)^{-1}$ for all $t$ where $\gamma$ is a constant bigger than the finite sum $\sum_{t=1}^{\infty}t^{\alpha_{\epsilon}}$. Since $\sum_{t=1}^{\infty}\epsilon_t\cdot\max_{\boldsymbol\theta\in\bigcup_{b\in\mathcal{B}}\boldsymbol\Theta_b}\|\boldsymbol\theta\|_1<\epsilon$,  the expected regret caused by the state recovery process is smaller than $\epsilon$ with probability one.

We prove that the event that we cannot recover the previous state uniquely occurs with zero probability. Additionally, we establish in the proof of Lemma \ref{lemma:equal_vicinity} that the maximal instantaneous reward at each time instant is finite. Since we consider discrete time, the set of time instants is countable, and therefore the overall expected regret  caused by  state  estimation error is zero. Furthermore, even though we proved that a state detection error occurs with zero probability, for the sake of completeness of presentation we can add this case to Algorithm \ref{algo:UCRL_instantaneous} and state that if there is a state recovery error ((A2) does not hold) we do not update the terms $N_t(s),N_t(\tilde{s},\check{s}),N_t(b,s)$ and $N_t(b,s,\boldsymbol\theta)$ for the current time.
Following this discussion, hereafter we assume that the previous state $s_{t-1}$ is known to the decision maker when the choice of the arm and action at time $t$ are made.

Finally, we note that in a high dimensional problem where $N\gg1$, assuming that the optimal action is sparse, i.e., only a small number of indices in the optimal action $\boldsymbol{a}_t^*$ at time $t$ is non-zero, and assuming that the vectors in the set $\boldsymbol\Theta_{b,\tilde{s}}$ and the vectors in the set $\boldsymbol\Theta_{b,\check{s}}$ are different in at least one non-zero index of $\boldsymbol{a}_t^*$, we can consider actions in a lower dimensional ball where only the non-zero coordinates of $\boldsymbol{a}_t^*$ are considered instead of the $N$-dimensional ball. This is very helpful, for example, in the cognitive radio communication system we described in the introduction since it requires the receiver to estimate a significantly smaller number of channel states, this reduces the time delays, power consumption and complexity decoding design on the receiver end.
   
\subsection{Regret Caused by Suboptimal Rounds}
Next we bound the expected regret caused by suboptimal rounds in which the lengths of the confidence intervals are greater than $\Delta$. To analyze this expected regret we first present the following propositions.
\begin{proposition}\label{prop:NtoConf}
	Let $t_k$ be the starting time of round $k$.
	For every $s\in\mathcal{S}$  and $t>t_k>0$, if $\textup{conf}_S(t,s)\leq\frac{1}{2}\textup{conf}_S(t_k,s)$, then $N_{t}(s)\geq 4N_{t_k}(s)$. Additionally, 
	for every $s\in\mathcal{S},b\in\mathcal{B}$  and $t,t_k>0$, if $\textup{conf}_{\boldsymbol\Theta}(t,b,s)\leq \frac{1}{2}\textup{conf}_{\boldsymbol\Theta}(t,b,s)$, then $N_{t}(s,b)\geq 4N_{t_k}(s,b)$.
\end{proposition}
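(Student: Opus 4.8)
The plan is to prove both assertions by the same elementary case analysis, so I will carry it out for $\mathrm{conf}_S$ and note that the statement for $\mathrm{conf}_{\boldsymbol\Theta}$ follows verbatim with $N_t(b,s)$ in place of $N_t(s)$ and $\log(4(t-1)^\alpha|\boldsymbol\Theta_{b,s}||\mathcal B||\mathcal S|)$ in place of $\log(4(t-1)^\alpha|\mathcal S|^2)$ (the second hypothesis as written, $\mathrm{conf}_{\boldsymbol\Theta}(t,b,s)\le\tfrac12\mathrm{conf}_{\boldsymbol\Theta}(t,b,s)$, is a typo for $\mathrm{conf}_{\boldsymbol\Theta}(t,b,s)\le\tfrac12\mathrm{conf}_{\boldsymbol\Theta}(t_k,b,s)$). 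First I would observe that the hypothesis $\mathrm{conf}_S(t,s)\le\tfrac12\mathrm{conf}_S(t_k,s)\le\tfrac12<1$ forces the minimum defining $\mathrm{conf}_S(t,s)$ to be attained by its square-root term; in particular $N_t(s)>0$ and $t\ge 2$, since $\mathrm{conf}_S(0,s)=\mathrm{conf}_S(1,s)=1$ by the algorithm's initialization. Hence $\mathrm{conf}_S(t,s)^2=\log(4(t-1)^\alpha|\mathcal S|^2)/(2N_t(s))$.

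Next I would split on whether $\mathrm{conf}_S(t_k,s)<1$ or $\mathrm{conf}_S(t_k,s)=1$. In the first case $\mathrm{conf}_S(t_k,s)$ likewise equals its square-root term, $N_{t_k}(s)>0$, and $t_k\ge 2$ (again because $\mathrm{conf}_S(0,s)=\mathrm{conf}_S(1,s)=1$); squaring the hypothesis, which is legitimate since both sides are nonnegative, and clearing denominators gives
\[
N_t(s)\ \ge\ 4\,N_{t_k}(s)\cdot\frac{\log\!\big(4(t-1)^\alpha|\mathcal S|^2\big)}{\log\!\big(4(t_k-1)^\alpha|\mathcal S|^2\big)}.
\]
Since $t_k\ge 2$ the denominator is at least $\log 4>0$, and since $t>t_k$ we have $t-1\ge t_k-1$, so the log-ratio is $\ge 1$ and therefore $N_t(s)\ge 4N_{t_k}(s)$, as required.

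In the second case $\mathrm{conf}_S(t_k,s)=1$. If $N_{t_k}(s)=0$ the conclusion $N_t(s)\ge 4N_{t_k}(s)$ is trivial (and this covers $t_k\in\{0,1\}$). Otherwise the square-root term at time $t_k$ is $\ge 1$, i.e.\ $N_{t_k}(s)\le\tfrac12\log(4(t_k-1)^\alpha|\mathcal S|^2)$, while squaring $\mathrm{conf}_S(t,s)\le\tfrac12$ yields $N_t(s)\ge 2\log(4(t-1)^\alpha|\mathcal S|^2)\ge 2\log(4(t_k-1)^\alpha|\mathcal S|^2)\ge 4N_{t_k}(s)$, again using $t-1\ge t_k-1$. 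This completes the argument. The only points demanding care are the boundary behaviour of the $\min$ at the value $1$ together with the convention $\mathrm{conf}_S(0,s)=\mathrm{conf}_S(1,s)=1$ (both absorbed into the second case, where $N_{t_k}(s)=0$ whenever $t_k\in\{0,1\}$), and checking that $\log(4(t_k-1)^\alpha|\mathcal S|^2)>0$ so that the division in the first case is valid; there is no deeper obstacle.
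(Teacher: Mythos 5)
Your proof is correct and follows essentially the same route as the paper's: the same case split on whether $\mathrm{conf}_S(t_k,s)$ equals $1$ or its square-root term, the same squaring-and-clearing-denominators step, and the same use of the monotonicity of the logarithmic numerator in $t$ to drop the log-ratio. Your handling of the boundary conventions ($N_{t_k}(s)=0$, the initialization $\mathrm{conf}_S(0,s)=\mathrm{conf}_S(1,s)=1$) and your flagging of the typo in the second hypothesis are slightly more careful than the paper's, but the argument is the same.
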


\begin{proposition}\label{prop:num_rounds}
In $T$ time instants there are at most 
 $|\mathcal{S}||\mathcal{B}|\left[\log_2\left(\frac{T}{|\mathcal{S}||\mathcal{B}|}+1\right)+1\right]$ rounds.	
\end{proposition}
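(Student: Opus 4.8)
The plan is to bound the number of rounds by analyzing how the round-termination criterion forces at least one confidence radius to halve during each round. First I would recall the round structure: round $k$ starts at time $t_k$, and it continues to run as long as \emph{both} of the two ``while'' conditions hold, namely $\textup{conf}_S(t,s) > \tfrac12\textup{conf}_S(t_k,s)$ for every $s\in\mathcal{S}$ \emph{and} $\textup{conf}_{\boldsymbol\Theta}(t,b,s) > \tfrac12\textup{conf}_{\boldsymbol\Theta}(t_k,b,s)$ for every $(b,s)\in\mathcal{B}\times\mathcal{S}$. Hence round $k$ ends precisely when some state $s$ (or some pair $(b,s)$) has its confidence radius drop to at most half its value at $t_k$; by Proposition~\ref{prop:NtoConf} this is equivalent to saying that the relevant visit count $N(s)$ (or $N(b,s)$) has at least \emph{quadrupled} since $t_k$. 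So at the end of each round, at least one of the $|\mathcal{S}|+|\mathcal{B}||\mathcal{S}|$ counters has quadrupled. (Up to absorbing the $|\mathcal{S}|$ Markov-chain counters into the $|\mathcal{B}||\mathcal{S}|$ arm–state counters, or simply using a crude bound, the effective number of counters driving the termination is $O(|\mathcal{S}||\mathcal{B}|)$; for the cleanest statement one notes $\sum_s N_t(s) = \sum_{b,s} N_t(b,s)$, the current time minus initialization, so tracking the $|\mathcal{B}||\mathcal{S}|$ counters $N(b,s)$ suffices.)

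Next I would do the counting argument. Fix a single counter, say $N(b,s)$, and let it be the ``triggering'' counter for rounds $k_1 < k_2 < \cdots < k_m$. At the start of round $k_{j+1}$ the counter is at least four times its value at the start of round $k_j$; but since it must be a positive integer to trigger (it quadrupled from a positive value — the very first time it triggers its value is at least $1$ at the previous round start, actually one must be a little careful and use that the first trigger requires $N\ge 1$, so at trigger time $N\ge 4$, etc.), after $m$ triggers it is at least $4^{m-1}$. A counter can be at most $T$ (total time horizon). Hence $4^{m-1}\le T$, giving $m \le \log_4 T + 1 = \tfrac12\log_2 T + 1$. Summed over all $|\mathcal{B}||\mathcal{S}|$ counters this already gives $O(|\mathcal{S}||\mathcal{B}|\log T)$ rounds; to get the sharper constant in the claimed bound $|\mathcal{S}||\mathcal{B}|[\log_2(\tfrac{T}{|\mathcal{S}||\mathcal{B}|}+1)+1]$ I would use a convexity / Jensen argument: the counters satisfy $\sum_{b,s} N_T(b,s) \le T$, and the number of times counter $(b,s)$ triggers is at most $\log_4 N_T(b,s) + 1 \le \log_2 N_T(b,s) + 1$ (using base $2$ for a cruder but cleaner bound, or keeping base $4$), so the total round count is at most $\sum_{b,s}\bigl(\log_2 N_T(b,s)+1\bigr)$; since $x\mapsto \log_2 x$ is concave, $\sum_{b,s}\log_2 N_T(b,s) \le |\mathcal{S}||\mathcal{B}|\log_2\!\bigl(\tfrac{\sum_{b,s}N_T(b,s)}{|\mathcal{S}||\mathcal{B}|}\bigr) \le |\mathcal{S}||\mathcal{B}|\log_2\!\bigl(\tfrac{T}{|\mathcal{S}||\mathcal{B}|}\bigr)$, and adding the $|\mathcal{S}||\mathcal{B}|$ ``$+1$'' terms and a $+1$ slack to pass from $\log_2(T/|\mathcal{S}||\mathcal{B}|)$ to $\log_2(T/|\mathcal{S}||\mathcal{B}|+1)$ yields the stated bound.

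The main obstacle — really the only subtle point — is the bookkeeping at the \emph{boundaries}: handling counters that are still $0$ or very small in the early rounds (the confidence radii are capped at $1$ and set to $1$ for $t\in\{0,1\}$, so ``halving'' is only meaningful once a counter has become positive), and making sure the ``quadrupling'' conclusion of Proposition~\ref{prop:NtoConf} is applied with the right off-by-one constants so that the $m\le \log_4 T+1$ bound is airtight. One has to argue that before any counter becomes positive the algorithm is in at most one (or a bounded number of) initial rounds, and that once a counter is positive each subsequent trigger genuinely multiplies it by $\ge 4$. Once those edge cases are dispatched, the geometric-growth-plus-concavity computation is routine and delivers exactly the claimed $|\mathcal{S}||\mathcal{B}|\bigl[\log_2\bigl(\tfrac{T}{|\mathcal{S}||\mathcal{B}|}+1\bigr)+1\bigr]$ bound.
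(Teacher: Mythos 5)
Your argument is correct and follows essentially the same route as the paper: round termination forces some counter to quadruple via Proposition~\ref{prop:NtoConf}, the $|\mathcal{S}||\mathcal{B}|$ counters $N(b,s)$ dominate (the paper uses $N_t(s)\geq N_t(b,s)$ where you use the sum identity) and together account for at most $T$ time steps, so geometric growth yields the logarithmic bound. The only difference is cosmetic: you finish with Jensen's inequality on $\log_2$, whereas the paper bounds $T$ below by $|\mathcal{S}||\mathcal{B}|\sum_{i=1}^{\rho_{\max}}4^i$ and converts $\log_4$ to $\log_2$ directly.
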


\begin{proposition}\label{prop:confidence_length}
	If $N_t(s)>\frac{2\log(4(t-1)^{\alpha}|\mathcal{S}|^2)}{\Delta^2}$ then the confidence interval for $s$ is smaller than $\Delta/2$. Further, if $N_t(b,s)>\frac{2\log(4(t-1)^{\alpha}|\boldsymbol\Theta_{b,s}||\mathcal{B}||\mathcal{S}|)}{\Delta^2}$ then the confidence interval for $(b,s)$ is smaller than $\Delta/2$. 
\end{proposition}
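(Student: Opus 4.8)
The plan is a direct substitution into the closed form of the confidence bounds, followed by an elementary monotonicity argument. Recall that
\[
\text{conf}_{S}(t,s)=\min\left\{1,\sqrt{\frac{\log(4(t-1)^{\alpha}|\mathcal{S}|^2)}{2N_{t}(s)}}\right\},
\]
so that $\text{conf}_{S}(t,s)\le \sqrt{\log(4(t-1)^{\alpha}|\mathcal{S}|^2)/(2N_{t}(s))}$ as soon as $N_t(s)>0$. First I would record that for $t>1$ we have $(t-1)^{\alpha}\ge 1$ and $|\mathcal{S}|^2\ge 1$, hence $4(t-1)^{\alpha}|\mathcal{S}|^2\ge 4>1$ and therefore $\log(4(t-1)^{\alpha}|\mathcal{S}|^2)>0$. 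In particular the hypothesis $N_t(s)>2\log(4(t-1)^{\alpha}|\mathcal{S}|^2)/\Delta^2$ already forces $N_t(s)>0$, so the square-root expression above is a well-defined positive real and is the branch of the $\min$ we need to bound.

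Next, using that $x\mapsto c/(2x)$ is strictly decreasing on $x>0$ for fixed $c>0$ and that $\sqrt{\cdot}$ is increasing, I would plug the lower bound on $N_t(s)$ into the square root:
\[
\text{conf}_{S}(t,s)\le\sqrt{\frac{\log(4(t-1)^{\alpha}|\mathcal{S}|^2)}{2N_{t}(s)}}<\sqrt{\frac{\log(4(t-1)^{\alpha}|\mathcal{S}|^2)\,\Delta^2}{4\log(4(t-1)^{\alpha}|\mathcal{S}|^2)}}=\frac{\Delta}{2},
\]
where the cancellation of $\log(4(t-1)^{\alpha}|\mathcal{S}|^2)$ is legitimate precisely because that quantity is strictly positive, and $\sqrt{\Delta^2/4}=\Delta/2$ since $\Delta>0$. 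Chaining the two inequalities gives the first claim, namely that the confidence interval for $s$ has length below $\Delta/2$.

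The second claim is obtained verbatim after the substitutions $|\mathcal{S}|^2\mapsto|\boldsymbol\Theta_{b,s}||\mathcal{B}||\mathcal{S}|$, $N_t(s)\mapsto N_t(b,s)$, and $\text{conf}_{S}\mapsto\text{conf}_{\boldsymbol\Theta}$; the only point to recheck is $4(t-1)^{\alpha}|\boldsymbol\Theta_{b,s}||\mathcal{B}||\mathcal{S}|>1$, which again holds for $t>1$, so the logarithm appearing in the numerator is positive and may be cancelled exactly as above. I do not expect a genuine obstacle here: the statement is essentially the algebraic identity that defines the threshold on $N_t$, and the only subtlety worth flagging is the positivity of the logarithm, which simultaneously guarantees $N_t(s)>0$ (so that the $\sqrt{\cdot}$-branch of the $\min$ is the correct thing to bound) and validates the cancellation.
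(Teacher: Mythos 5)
Your proof is correct and follows essentially the same route as the paper's: substitute the assumed lower bound on $N_t(s)$ (resp.\ $N_t(b,s)$) into the square-root branch of the $\min$ defining the confidence length and cancel the logarithm. Your added remarks on the positivity of the logarithm and the resulting strictness of the inequality are harmless refinements of the same one-line computation.
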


Suppose that $k$ is a suboptimal round, then at least one of the following two error events occurs: 
\begin{enumerate}
	\item There exist $\tilde{s},\check{s}\in\mathcal{S}$ such that $|\hat{P}_{t_k}(\tilde{s},\check{s})-P_s(\tilde{s},\check{s})|>\Delta/2$ 
	\item Suppose that the policy for round $k$ chooses the arm $b$ whenever state $\tilde{s}$ is observed, then  there exist $s\in\mathcal{S}$,  an arm $b$ and $\boldsymbol\theta\in\boldsymbol\Theta_{b,s}$ such that $|\hat{P}_{t_k,\boldsymbol\Theta_{b,s}}(\boldsymbol\theta)-P_{\boldsymbol\Theta_{b,s}}(\boldsymbol\theta)|>\Delta/2$.
\end{enumerate}

The expected regret that is caused by the first error event is upper-bounded by the term
\begin{flalign}\label{eq:regret_P_S_matrix}
&4cr_{\max}|\mathcal{S}|T_M\frac{\log(4T^{\alpha}|\mathcal{S}|^2)}{\Delta^2}\nonumber\\
&+2r_{\max}T_M|\mathcal{S}|^2|\mathcal{B}|\log_2\left(\frac{T}{|\mathcal{S}||\mathcal{B}|}+1\right)+r_{\max}|\mathcal{S}|
\end{flalign}
where $c$ is a constant satisfying $c<14$.
This is a direct result of the analysis presented in  \cite{Auer07logarithmiconline} and Propositions \ref{prop:NtoConf}-\ref{prop:confidence_length}. 

The expected regret caused by the second event can be upper bounded as follows. Denote  by $\mathcal{S}_b$ the set of states which upon observing, the decision maker plays the arm $b$. Suppose that there is $\boldsymbol\theta\in\boldsymbol\Theta_{b,s}$ such that $|\hat{P}_{t_k,\boldsymbol\Theta_{b,s}}(\boldsymbol\theta)-P_{\boldsymbol\Theta_{b,s}}(\boldsymbol\theta)|>\Delta/2$ for given $b$ and $s$ such that $\mathcal{S}_b$ is not empty. Let $n(b,s)$ be the number of such rounds  and let $\tau_1(b,s),\ldots,\tau_{n(b,s)}(b,s)$ be their respective lengths. We next upper bound the expected value of the term $\sum_{i=1}^{n(b,s)}\tau_{i}(b,s)$ by dividing each suboptimal round $i$ into $\left\lfloor \frac{\tau_{i}(b,s)}{2T_ST_M}\right\rfloor$ sub-intervals. By the Markov inequality the probability of reaching a state in $\mathcal{S}_b$,  playing the arm $b$, and then immediately reaching the state $s$, is at least $\frac{1}{2}$, for each of these sub-intervals, regardless of the initial state at the beginning of the sub-interval. Let $x_m$ be a binary random variable that is equal to one if  in the $m$th sub-interval arm $b$ was chosen and then the state $s$ was immediately observed, and zero otherwise.
Let $N(b,s,m)=\sum_{i=1}^m X_i$ be the number of such sub-intervals out of $m$.
Then,
\begin{flalign}
&\Pr\left(N(b,s,m)\geq \frac{m}{2}-\sqrt{m\log T}\right)\nonumber\\
&=1-\Pr\left(N(b,s,m) -\frac{m}{2}<-\sqrt{m\log T}\right).
\end{flalign}
Let $Y_m=N(b,s,m)-\frac{m}{2}$, and note that 
$(N(b,s,m)-N(b,s,m-1))\in\{0,1\}$.
It follows that
\begin{flalign}
 |Y_m-Y_{m-1}| = \left|N(b,s,m)-N(b,s,m-1)-\frac{1}{2}\right|\leq \frac{1}{2}.
\end{flalign}
Since $E\left(N(b,s,m)|N(b,s,m-1)\right)\geq \frac{1}{2}$, the sequence $Y_m$ is a submartingale.
Thus, by the Azuma-Hoeffding inequality we have that:
\begin{flalign}
&\Pr\left(N(b,s,m)- \frac{m}{2}<-\sqrt{m\log T}\right)\nonumber\\
&\hspace{2cm}\leq
\exp\left(\frac{-m\log T}{2m/4}\right)\leq\frac{1}{T}.
\end{flalign}
Therefore, 
\begin{flalign}
&\Pr\left(N(b,s,m)\geq \frac{m}{2}-\sqrt{m\log T}\right)\geq 1-\frac{1}{T}.
\end{flalign}

Since the confidence bound is  greater than $\Delta/2$, the next possible policy update with confidence bound smaller than $\Delta/2$ must have a confidence bound in the interval  $[\Delta/4,\Delta/2]$.
By Proposition \ref{prop:confidence_length} we have that 
\[N_T(b,s)<\frac{8\log(4(T-1)^{\alpha}|\boldsymbol\Theta_{b,s}|\mathcal{B}||\mathcal{S}|)}{\Delta^2},\] 
since $N(b,s,m)\leq N_T(b,s)$ it follows that:
$
\sum_{i=1}^{n(b,s)}\left\lfloor \frac{\tau_{i}(b,s)}{2T_ST_M}\right\rfloor\leq c\frac{2\log(4T^{\alpha}|\boldsymbol\Theta_{b,s}||\mathcal{B}||\mathcal{S}|)}{\Delta^2}
$
for some constant $c<14$ with probability $1-\frac{1}{T}$.
It follows that 
\begin{flalign}
&\sum_{i=1}^{n(b,s)}\hspace{-0.1cm}\tau_{i}(b,s)\nonumber\\
&\qquad \leq2T_MT_Sc\frac{2\log(4T^{\alpha}|\boldsymbol\Theta_{b,s}||\mathcal{B}||\mathcal{S}|)}{\Delta^2}+2T_MT_S n(b,s)\nonumber\\
&\qquad\leq 
2T_MT_Sc\frac{2\log(4T^{\alpha}|\boldsymbol\Theta_{b,s}||\mathcal{B}||\mathcal{S}|)}{\Delta^2}\nonumber\\
&\qquad\quad+2T_MT_S |\mathcal{S}||\mathcal{B}|\left[\log_2\left(\frac{T}{|\mathcal{S}||\mathcal{B}|}+1\right)+1\right],
\end{flalign}
with probability $1-\frac{1}{T}$, where the last inequality follows by Proposition \ref{prop:num_rounds}.

Finally, denote  $C_{\boldsymbol\Theta_{\max}} = \max_{b,s}|\boldsymbol\Theta_{b,s}|$, then the expected regret is:
\begin{flalign}\label{overall_regret_2_conf_okay}
&C_{\boldsymbol\Theta_{\max}}|\mathcal{B}||\mathcal{S}|r_{\max}T\frac{1}{T}\nonumber\\
&+4C_{\boldsymbol\Theta_{\max}}|\mathcal{B}||\mathcal{S}|T_MT_Sr_{\max}c\frac{\log(4T^{\alpha}C_{\boldsymbol\Theta_{\max}}|\mathcal{B}||\mathcal{S}|)}{\Delta^2}\nonumber\\
&+2C_{\boldsymbol\Theta_{\max}}|\mathcal{B}|^2|\mathcal{S}|^2T_MT_Sr_{\max} \left[\log_2\left(\frac{T}{|\mathcal{S}||\mathcal{B}|}+1\right)+1\right].
\end{flalign}

\subsection{Regret Caused by Failure of the Confidence Intervals}\label{sec:fail_confidence_bound}
Next we upper bound the expected regret caused by the failure of the confidence intervals, i.e., the probability distributions that we estimate are outside the confidence intervals.

Recall that $t_k$ is the starting time of round $k$;
by (\ref{eq:condience_bounds_inequality}),  the probability that one of the confidence intervals fails in round $k$ is upper bounded by the union bound as follows:
\begin{flalign*}
&|\mathcal{S}|^2\frac{(t_k-1)^{-\alpha+1}}{2|\mathcal{S}|^2}+\sum_{(b,s)\in\mathcal{B}\times\mathcal{S}}|\boldsymbol\Theta_{b,s}|\frac{(t_k-1)^{-\alpha}}{2|\boldsymbol\Theta_{b,s}||\mathcal{B}||\mathcal{S}|}\nonumber\\
&\leq(t_k-1)^{-\alpha+1}.
\end{flalign*}
It follows that the expected regret caused by the failure of the confidence bounds can be upper bounded as follows:
\begin{flalign}
&\sum_{k=1}^{|\mathcal{S}||\mathcal{B}|\left[\log_2\left(\frac{T}{|\mathcal{S}||\mathcal{B}|}+1\right)+1\right]} r_{\max}(t_k-1)^{-\alpha+1}(t_k-t_{k-1}) \nonumber\\
&\qquad\leq r_{\max}\sum_{t=0}^{\infty} t^{-\alpha+2} < \infty
\end{flalign}
where the last inequality follows since $\alpha>3$. Thus, the expected regret caused by the failure of the confidence bounds is bounded.

\subsection{Regret Caused by the Deviation of the Initial Distribution from the Stationary Distribution}
Finally, the expected regret $T\rho(\pi^*)-\sum_{t=1}^TE[r_t]$  depends on the  initial distribution of the Markov chain $P_S$. 
By the analysis of the regret caused by error in state recovery, we recover the identity of the previous state with probability one while causing a bounded regret. Thus, we assume that the decision maker knows the identity of the previous state upon making a decision.
The following lemma\footnote{We prove this lemma in Appendix \ref{append:Lemma_2_proof}.} bounds the regret caused by deviating from the stationary distribution $\mu_S$ of a round of length $T$.
\begin{lemma}\label{lemma:not_stationary_first}
	Assuming that the optimal policy $\pi^*$ is played in the restless Markov bandits model with linear rewards, then
$\sum_{t=1}^{T} E[\rho(\pi^*)-r_t(b_t^*,\boldsymbol a_t^*)] \leq T_M r_{\max}$ where $(b_t^*,\boldsymbol a_t^*) = \pi^*(s_{t-1})$.
\end{lemma}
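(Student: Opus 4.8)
The plan is to observe that when the optimal policy $\pi^*$ is played in the restless model the state sequence $(s_t)$ is simply a finite, irreducible, aperiodic Markov chain with transition matrix $P_S$ — the arm and action choices influence only the reward, not the dynamics — so the left-hand side is the transient error of a Markov reward process relative to its stationary average reward. First I would pass to a one-step expected reward: writing $\bar r(\tilde s)\triangleq\sum_{\check s\in\mathcal S}P_S(\tilde s,\check s)\sum_{\boldsymbol\theta\in\boldsymbol\Theta_{b_{\pi^*}(\tilde s),\check s}}P_{\boldsymbol\Theta_{b_{\pi^*}(\tilde s),\check s}}(\boldsymbol\theta)\langle\boldsymbol a_{\pi^*}(\tilde s),\boldsymbol\theta\rangle$, one has $E[r_t(b_t^*,\boldsymbol a_t^*)\mid s_{t-1}]=\bar r(s_{t-1})$ and $\rho(\pi^*)=\sum_{\tilde s\in\mathcal S}\mu_S(\tilde s)\bar r(\tilde s)$, so by the tower rule it suffices to bound $\sum_{t=1}^T E[\rho(\pi^*)-\bar r(s_{t-1})]$.

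Next I would introduce the Poisson (bias) equation for this reward process: since $P_S$ is irreducible on the finite set $\mathcal S$, there exists $h:\mathcal S\to\mathbb R$ with $h(\tilde s)=\bar r(\tilde s)-\rho(\pi^*)+\sum_{\check s\in\mathcal S}P_S(\tilde s,\check s)h(\check s)$ for every $\tilde s$ (solvability holds because $\rho(\pi^*)\mathbf 1-\bar r$ is $\mu_S$-orthogonal, by definition of $\rho(\pi^*)$). Rearranged, this reads $\bar r(s_{t-1})-\rho(\pi^*)=h(s_{t-1})-E[h(s_t)\mid s_{t-1}]$; taking expectations and summing over $t=1,\dots,T$ telescopes to $\sum_{t=1}^T E[\bar r(s_{t-1})-\rho(\pi^*)]=E[h(s_0)]-E[h(s_T)]$. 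Hence $\sum_{t=1}^T E[\rho(\pi^*)-\bar r(s_{t-1})]=E[h(s_T)]-E[h(s_0)]\le\operatorname{sp}(h)$, where $\operatorname{sp}(h)\triangleq\max_{s\in\mathcal S}h(s)-\min_{s\in\mathcal S}h(s)$, and crucially this bound is uniform in $T$.

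It then remains to show $\operatorname{sp}(h)\le T_M r_{\max}$, which I regard as the crux of the argument. I would use the standard hitting-time representation of the bias: the process $M_n=h(s_n)+\sum_{t=0}^{n-1}(\bar r(s_t)-\rho(\pi^*))$ is a martingale with bounded increments (both $h$ and $\bar r$ are bounded on the finite space $\mathcal S$), so optional stopping at the first passage time $T_{\tilde s,\check s}$ of $\check s$ from $\tilde s$ — finite in expectation on a finite irreducible chain — gives $h(\tilde s)-h(\check s)=E_{\tilde s}\big[\sum_{t=0}^{T_{\tilde s,\check s}-1}(\bar r(s_t)-\rho(\pi^*))\big]$. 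Because every single-step reward $\langle\boldsymbol a,\boldsymbol\theta\rangle$ lies in an interval of length $r_{\max}$ and $\rho(\pi^*)$ is a convex combination of the values $\bar r(s)$, we have $|\bar r(s_t)-\rho(\pi^*)|\le r_{\max}$, so $|h(\tilde s)-h(\check s)|\le E_{\tilde s}[T_{\tilde s,\check s}]\,r_{\max}\le T_M r_{\max}$; choosing $\tilde s,\check s$ as the maximizer and minimizer of $h$ yields $\operatorname{sp}(h)\le T_M r_{\max}$, which is exactly the claim. The main technical points to handle carefully are the solvability of the Poisson equation and the justification of optional stopping, both routine for finite irreducible chains; an equivalent but slightly messier route replaces the bias function by an explicit coupling of $(s_t)$ with a stationary copy absorbed at their first meeting time, whose expectation is again at most $T_M$.
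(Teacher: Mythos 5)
Your proof is correct, but it takes a genuinely different route from the paper's. You reduce the claim to a statement about the bias function of the Markov reward process: you solve the Poisson equation $(I-P_S)h=\bar r-\rho(\pi^*)\mathbf 1$ (solvable since $\bar r-\rho(\pi^*)\mathbf 1$ has zero $\mu_S$-mean), telescope to get $\sum_{t=1}^T E[\rho(\pi^*)-\bar r(s_{t-1})]=E[h(s_T)]-E[h(s_0)]\le \operatorname{sp}(h)$, and then bound the span by $T_M r_{\max}$ through the hitting-time representation of $h$ obtained by optional stopping of the martingale $M_n=h(s_n)+\sum_{t<n}(\bar r(s_t)-\rho(\pi^*))$. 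The paper avoids the bias function entirely: it uses stationarity of $\mu_S$ to write $T\rho(\pi^*)$ as a $\mu_S$-average of $T$-step expected value functions, extracts by an averaging (pigeonhole) argument a state $s$ whose $T$-step value is at least $T\rho(\pi^*)$, and then regenerates at the first hitting time $t_s$ of that state, losing at most $r_{\max}$ per step before $t_s$ and using $E(t_s)\le T_M$. Both arguments ultimately rest on the same two quantities, an expected hitting time bounded by $T_M$ and the per-step reward range $r_{\max}$, and yield the same constant. Your route is the standard average-reward MDP argument (as in UCRL2-style analyses) and is arguably cleaner: the telescoping makes the uniformity in $T$ immediate and exposes that the sharp bound is really $\operatorname{sp}(h)$, which can be smaller than $T_M r_{\max}$; the price is the verification of solvability of the Poisson equation and of the optional-stopping hypotheses, both of which you correctly flag and which are indeed routine for a finite irreducible aperiodic chain.
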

Thus, by Proposition \ref{prop:num_rounds}, the expected regret caused by deviating initially from the stationary distribution of $P_S$ does not exceed\footnote{
	We note that this regret can be bounded more tightly. Since the Markov chain $P_S$ is aperiodic and irreducible, we can bound the deviation from the stationary distribution using Theorem 4.9 in \cite{opac-b1128575}. However, since the upper bound we derive for this regret event is smaller than  (\ref{overall_regret_2_conf_okay}), we do not reduce it further.  } 
$r_{\max}T_M|\mathcal{S}||\mathcal{B}|\log_2\left[\left(\frac{T}{|\mathcal{S}||\mathcal{B}|}+1\right)+1\right]$.

\section{Numerical Results}
Next, we present numerical results evaluating the performance of Algorithm \ref{algo:UCRL_instantaneous}. We compare the average regret of Algorithm \ref{algo:UCRL_instantaneous} to that of a straightforward implementation of the UCRL algorithm, presented in \cite{Auer07logarithmiconline} using two sets of confidence bounds, one for estimating the probability distribution $P(\tilde{s},b,\boldsymbol a,\check{s})\triangleq P_{S}(\tilde{s},\check{s})$ and one for estimating the expected reward  $r(b,\boldsymbol a,\check{s})=\sum_{\boldsymbol\theta\in\boldsymbol\Theta_{b,\check{s}}}P_{\boldsymbol\Theta_{b,\check{s}}}(\boldsymbol\theta)\left\langle\boldsymbol a,\boldsymbol \theta\right\rangle$ for every $\tilde{s},\check{s}\in\mathcal{S},\boldsymbol a\in\boldsymbol{V},b\in\mathcal{B}$, in addition to recovering the previous state using our state recovering scheme. It is easy to see that the values of the probability distributions $P_{S}(\tilde{s},\check{s})$ and $P_{\boldsymbol\Theta_{b,\check{s}}}(\boldsymbol\theta)$  do not depend on the value of the action $\boldsymbol a$; however, the straightforward use of confidence intervals does not take advantage of this fact and estimates the joint probability distributions for every value of $\boldsymbol a$ using only the measurement of the times when this action is played. Additionally, we compare  Algorithm \ref{algo:UCRL_instantaneous} to its variation in which instead of using two sets of confidence bounds, one for the transition matrix of the states and one for the probability distributions of $\boldsymbol\theta$,  we use a single set of confidence bounds for estimating the joint probability distribution $P(\tilde{s},b,\check{s},\boldsymbol\theta)\triangleq P_{S}(\tilde{s},\check{s})P_{\boldsymbol\Theta_{b,\check{s}}}(\boldsymbol\theta)$  for a given quadruple $(\tilde{s},b,\check{s},\boldsymbol\theta)$. To understand the contribution each of the parts of the probability distribution estimation of Algorithm \ref{algo:UCRL_instantaneous} provides we also compare the regret achieved by Algorithm \ref{algo:UCRL_instantaneous} to that of a partially oblivious UCRL algorithm where the side information is used in the estimation of $P_S$ that is calculated jointly for all arms and actions, however, no side information is used for the structure of the reward and thus the expected reward is estimated for each possible action. Additionally, in this partially oblivious scheme we assume that the decision maker is given the identity of the previous system state, therefore, the reduction in regret,  in this case,  does not include a state recovery scheme.

To evaluate the expected regret of Algorithm \ref{algo:UCRL_instantaneous} and the additional schemes we considered two sets of parameters. To demonstrate the high regret that is caused by ignoring the structural side information when the number of extreme points of $\mathcal{A}$ grows exponentially with the dimension the action vector, the action set $\mathcal{A}$ in both sets of parameter is an $N$ dimensional cube.

\textbf{Setup 1:} $\mathcal{A}=\{0,1\}^2$, $\mathcal{B}=\{1,2\}$, and $\mathcal{S}=\{1,2\}$. 
 $|\boldsymbol\Theta_{b,s}|=2,\: \forall b\in\mathcal{B},s\in\mathcal{S}$.
 
 For this system dimensions, we consider two sets of parameters.
 $ $
 \newline
 \textit{System 1a:}
 The vectors $\boldsymbol \theta\in \boldsymbol\Theta_{b,s}$ were drawn uniformly from the set $\{-7,-6,\ldots,10\}^2$. 
Transition probability: 
$P_S = \begin{pmatrix}
0.4 & 0.6\\
0.75 & 0.25
\end{pmatrix}$. 
Additionally,
\begin{flalign*}
P_{ \boldsymbol\Theta_{b=1,s=1}} =  (0.4, 0.6),
 P_{ \boldsymbol\Theta_{b=2,s=1}} = (0.7, 0.3),\\
 P_{ \boldsymbol\Theta_{b=1,s=2}} = (0.7, 0.3),
 P_{ \boldsymbol\Theta_{b=2,s=2}} = (0.5, 0.5).
 \end{flalign*}
 
 \textit{System 1b:}
 The vectors $\boldsymbol \theta\in \boldsymbol\Theta_{b,s}$ were drawn uniformly from the set $\{-10,-9,\ldots,15\}^2$. 
Transition probability: 
$P_S = \begin{pmatrix}
0.8, 0.2\\
0.45,0.55
\end{pmatrix}$.
Additionally,
\begin{flalign*}
P_{ \boldsymbol\Theta_{b=1,s=1}} =  (0.8, 0.2),
 P_{ \boldsymbol\Theta_{b=2,s=1}} = (0.45, 0.55),\\
 P_{ \boldsymbol\Theta_{b=1,s=2}} = (0.3, 0.7),
 P_{ \boldsymbol\Theta_{b=2,s=2}} = (0.4, 0.6).
 \end{flalign*}

 $ $ \newline
 \textbf{Setup 2:}
$\mathcal{A}=\{0,1\}^5$, $\mathcal{B}=\{1,2,3,4\}$, and $\mathcal{S}=\{1,2,3\}$. $|\boldsymbol\Theta_{b,s}|=2$ for every $b\in\mathcal{B},s\in\mathcal{S}$. 

 For this system dimensions, we consider two sets of parameters.
$ $
 \newline
 \textit{System 2a:}
 The vectors $\boldsymbol \theta\in \boldsymbol\Theta_{b,s}$ were drawn uniformly from the set $\{-7,-6,\ldots,10\}^5$. 
Transition probability: 
 $P_S = \begin{pmatrix}
 0.4 & 0.3&0.3\\
 0.25&0.5 & 0.25\\
 0.3& 0.25& 0.45
 \end{pmatrix}$. 
  Additionally:
 \begin{flalign*}
 &P_{ \boldsymbol\Theta_{b=1,s=1}} =  (0.4, 0.6),\quad
 P_{ \boldsymbol\Theta_{b=2,s=1}} = (0.7, 0.3),\nonumber\\
 &P_{ \boldsymbol\Theta_{b=3,s=1}} =  (0.25, 0.75),\quad
 P_{ \boldsymbol\Theta_{b=4,s=1}} = (0.35, 0.65),\nonumber\\
 &P_{ \boldsymbol\Theta_{b=1,s=2}} = (0.7, 0.3),\quad
 P_{ \boldsymbol\Theta_{b=2,s=2}} = (0.5, 0.5),\nonumber\\
 &P_{ \boldsymbol\Theta_{b=3,s=2}} = (0.2, 0.8),\quad
 P_{ \boldsymbol\Theta_{b=4,s=2}} = (0.45, 0.55),\nonumber\\
 &P_{ \boldsymbol\Theta_{b=1,s=3}} = (0.75, 0.25),\quad
 P_{ \boldsymbol\Theta_{b=2,s=3}} = (0.1, 0.9),\nonumber\\
 &P_{ \boldsymbol\Theta_{b=3,s=3}} = (0.6, 0.4),\quad
 P_{ \boldsymbol\Theta_{b=4,s=3}} = (0.32, 0.68). 
 \end{flalign*}
 $ $
 \newline
 \textit{System 2b:}
 The vectors $\boldsymbol \theta\in \boldsymbol\Theta_{b,s}$ were drawn uniformly from the set $\{-10,-9,\ldots,15\}^5$. 
Transition probability: 
 $P_S = \begin{pmatrix}
 0.25& 0.55& 0.2\\
 0.35& 0.25& 0.4\\
 0.2& 0.1& 0.7
 \end{pmatrix}$. 
  Additionally:
 \begin{flalign*}
 &P_{ \boldsymbol\Theta_{b=1,s=1}} =  (0.8, 0.2),\quad
 P_{ \boldsymbol\Theta_{b=2,s=1}} = (0.45, 0.55),\nonumber\\
 &P_{ \boldsymbol\Theta_{b=3,s=1}} =  (0.9, 0.1),\quad
 P_{ \boldsymbol\Theta_{b=4,s=1}} = (0.6, 0.4),\nonumber\\
 &P_{ \boldsymbol\Theta_{b=1,s=2}} = (0.3, 0.7),\quad
 P_{ \boldsymbol\Theta_{b=2,s=2}} = (0.14, 0.86),\nonumber\\
 &P_{ \boldsymbol\Theta_{b=3,s=2}} = (0.76, 0.24),\quad
 P_{ \boldsymbol\Theta_{b=4,s=2}} = (0.5, 0.5),\nonumber\\
 &P_{ \boldsymbol\Theta_{b=1,s=3}} = (0.4, 0.6),\quad
 P_{ \boldsymbol\Theta_{b=2,s=3}} = (0.72, 0.28),\nonumber\\
 &P_{ \boldsymbol\Theta_{b=3,s=3}} = (0.18, 0.82),\quad
 P_{ \boldsymbol\Theta_{b=4,s=3}} = (0.53, 0.47). 
 \end{flalign*}
 
 We also set the following values  $\epsilon=0.5$, $\alpha=3.1$, $\alpha_{\epsilon}=1.5$, $\gamma=1$. We ran a Monte Carlo simulation with 100 realizations of the sets $\boldsymbol\Theta_{b,s}$, for each such realization we generated 20 realizations of the state sequence, and their respective $\boldsymbol\theta$ given the choice of arm $b$. Finally we set $T=10^6$. 
 
 Figures \ref{fig_numerical_results1} and \ref{fig_numerical_results2}
 depict the average regret of each of the schemes that we mentioned at the beginning of this section, that is,  Algorithm \ref{algo:UCRL_instantaneous}, an adaptation of Algorithm \ref{algo:UCRL_instantaneous} with confidence intervals for $P(\tilde{s},b,\check{s},\boldsymbol\theta)$, a partially oblivious adaptation of the UCRL algorithm where the side information is used only in the estimation of the transiion matrix $P_S$ jointly for all arms and actions given a knowlegde of the previous state, and a straightforward adaptation of the UCRL algorithm \cite{Auer07logarithmiconline}.

\begin{figure}
	\centering
	\vspace{-0.4cm}
    \subfigure[Comparison for Setup 1a.]{
		\includegraphics[scale=0.625]{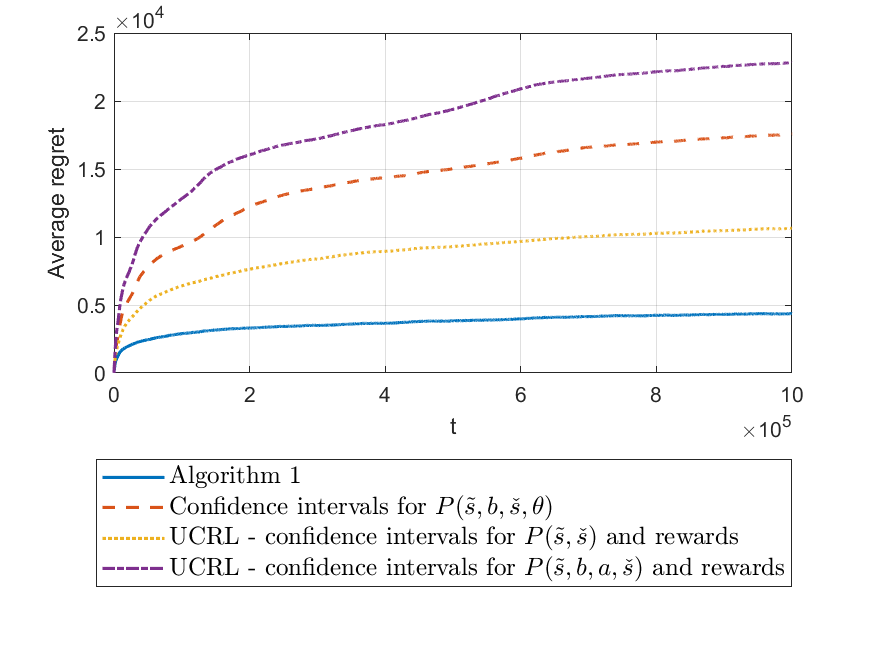}
		\label{fig_numerical_results1a} 
}	
    \subfigure[Comparison for Setup 1b.]{
		\includegraphics[scale=0.625]{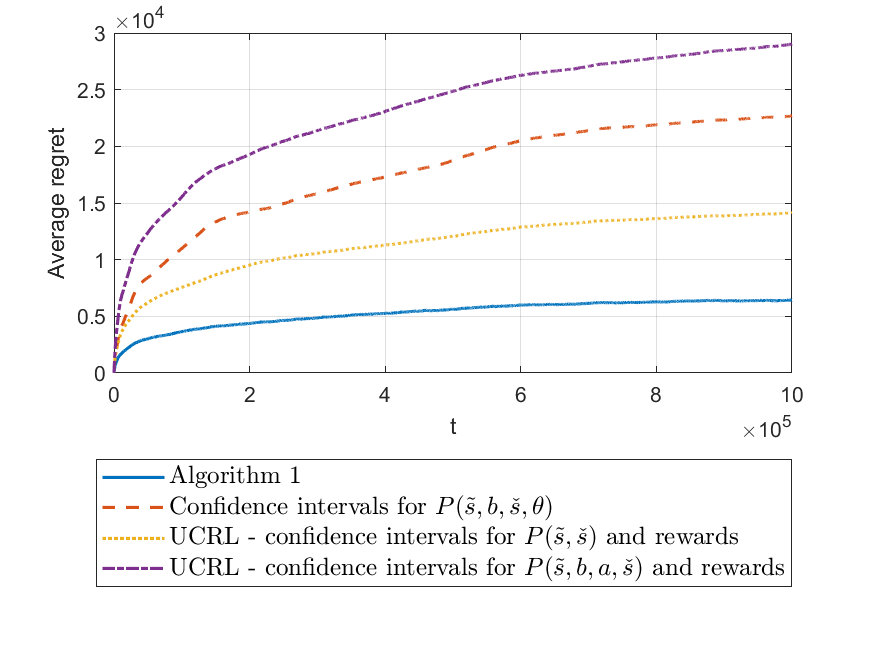}
		\label{fig_numerical_results1b}
	}
	
	\caption[Two numerical solutions]{Comparison between the average regret of three different schemes for Setup 1a and Setup 1b,  Algorithm \ref{algo:UCRL_instantaneous}, an adaptation of Algorithm \ref{algo:UCRL_instantaneous} with confidence intervals for $P(\tilde{s},b,\check{s},\boldsymbol\theta)$,  and a straightforward adaptation of the UCRL algorithm with confidence intervals for $P(\tilde{s},b,\boldsymbol{a},\check{s})$ and the expected reward function $r(b,a,\check{s})$, see \cite{Auer07logarithmiconline}.}
	\label{fig_numerical_results1}
\end{figure}

\begin{figure}
	\centering
	\vspace{-0.4cm}
    \subfigure[Comparison for Setup 2a.]{
		\includegraphics[scale=0.625]{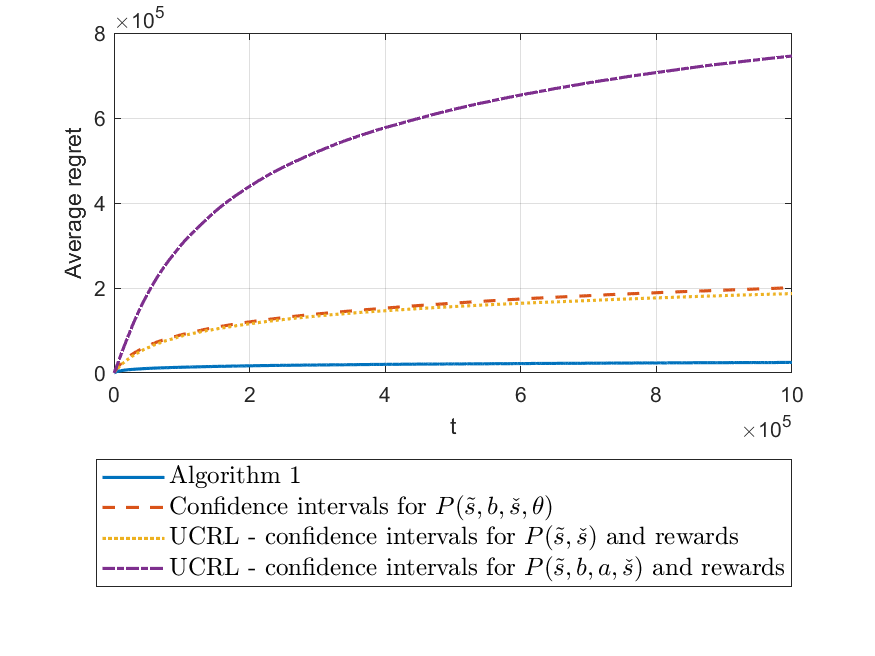}
		\label{fig_numerical_results2a} 
}	
    \subfigure[Comparison for Setup 2b.]{
		\includegraphics[scale=0.625]{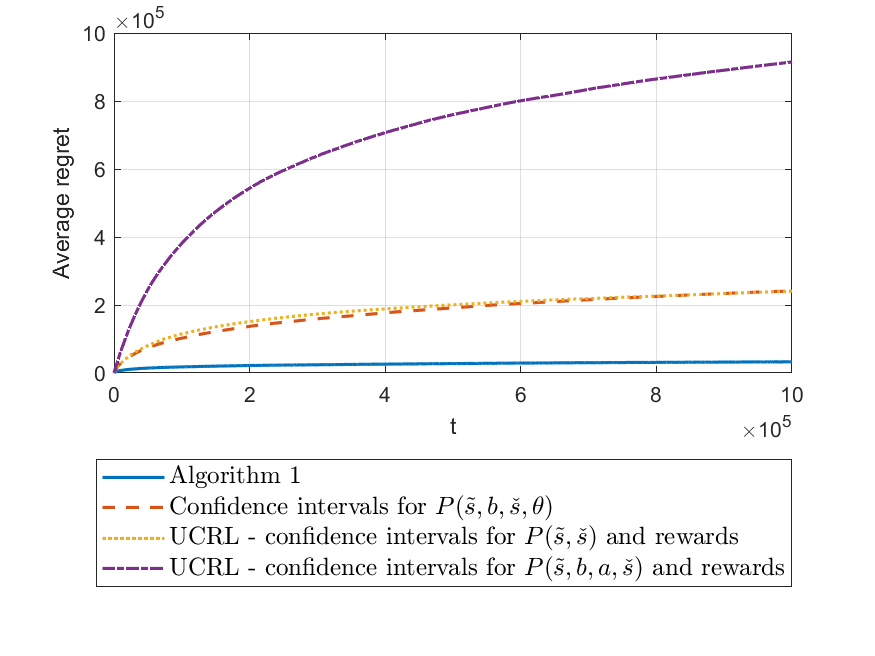}
		\label{fig_numerical_results2b}
	}
	
	\caption[Two numerical solutions]{Comparison between the average regret of four different schemes for Setup 2a and Setup 2b,  Algorithm \ref{algo:UCRL_instantaneous}, an adaptation of Algorithm \ref{algo:UCRL_instantaneous} with confidence intervals for $P(\tilde{s},b,\check{s},\boldsymbol\theta)$, and a straightforward adaptation of the UCRL algorithm with confidence intervals for $P(\tilde{s},b,\boldsymbol{a},\check{s})$ and the expected reward function $r(b,a,\check{s})$, see \cite{Auer07logarithmiconline}}
	\label{fig_numerical_results2}
\end{figure}
 Figures \ref{fig_numerical_results1} and \ref{fig_numerical_results2} show that Algorithm \ref{algo:UCRL_instantaneous} outperforms all the aforementioned possible schemes. This leads to the conclusion that separating the estimation of the probability distributions into two groups, one that is common to all arms (the transition matrix), and one that depends on the identity of the arm played (the probability distribution of $\boldsymbol\theta$) decreases the regret. Additionally, we note that utilizing the information regarding the reward function significantly decreases the regret, in our model it removes the dependency on the cardinality of the action set that may be large. Additionally, we can see that when $|\boldsymbol{V}|=|\boldsymbol\Theta_{b,s}|$ estimating the expected reward for each action and the probabilities $P_{\boldsymbol\Theta_{b,s}}$ yields comparable results as Figure \ref{fig_numerical_results1} demonstrates. However, as we increase the dimension $N$ and the cardinality of $\boldsymbol{V}$ increases exponentially as well, estimating the expected reward for each action yields expected regret that is an order of magnitude higher than the one achieved by estimating probabilities $P_{\boldsymbol\Theta_{b,s}}$.
 Finally, Figures \ref{fig_numerical_results1} and \ref{fig_numerical_results2} confirm that our state recovery scheme is indeed correct. 

\section{Extensions Generalization and Further Discussion}
\subsection{Generalization to Other Reward Functions}
 For the sake of simplicity of presentation this paper analyzes the expected loss function of a linear instantaneous reward function. Next, we show that our analysis holds generally for  convex reward functions that are bounded over the action set $\mathcal{A}$.
Let $g(x)$ be a convex function defined on the polytope $\mathcal{P}$ with an extreme  points set $\boldsymbol{V}$. Then every point in $x\in\mathcal{P}$ is a convex combination of the set of extreme points, i.e., there exist nonnegative weights $(w_v)_{v\in\boldsymbol{V}}$ such that $\sum_{v\in\mathcal{V}}w_v=1$ and $x=\sum_{v\in\boldsymbol{V}}w_v v$. Thus, by the convexity of $g$ 
\begin{flalign}
 g(x) = g\left(\sum_v w_v v\right)\leq \sum_v w_vg(v)\leq \max_{v\in\boldsymbol{V}}g(v),
\end{flalign}
for every $x\in\mathcal{P}$, and the maximum of $g$ in $\mathcal{P}$ is in the vertex set $\boldsymbol{V}$.
Therefore, our analysis holds for any convex and  continuous instantaneous reward function $r_i(\boldsymbol{a},\boldsymbol{\theta})$ that is bounded on the  set of the convex polytope action set $\mathcal{A}$ and for every $b\in\mathcal{B}$ the dimension of the set  $\{\boldsymbol{a}\in\mathcal{A},\boldsymbol{\theta}_1,\boldsymbol{\theta}_2\in\cup_{s\in\mathcal{S}}\boldsymbol\Theta_{b,s}:r_i(b,\boldsymbol{a},\boldsymbol{\theta}_1)=r_i(b,\boldsymbol{a},\boldsymbol{\theta_2})\}$ is at most $N-1$. If the system states $s$ and $\boldsymbol \theta$ are known to the decision maker prior to making a decision the last requirement can be omitted.
Specifically, in the linear case, $r_i(\boldsymbol{a},\boldsymbol{\theta}) = \langle\boldsymbol a, \boldsymbol \theta\rangle$.
Finally, for the sake of clarity of presentation the action set $\mathcal{A}$ does not depend on the choice of arm. However, our analysis can be easily extended to the case where every arm $b$ has its own action set $\mathcal{A}_b$ that is possible to choose from when playing arm $b$, assuming that $\mathcal{A}_b$ is a compact convex polytope for every $b\in\mathcal{B}$.

\subsection{An Alternative Estimation Scheme and Countable and Discrete Sets $\boldsymbol\Theta_{b,s}$}\label{sec:countable_theta}
In this work we assume that $|\boldsymbol{V}|\gg |\boldsymbol\Theta_{b,s}|$ for every $b\in\mathcal{B}$ and $s\in\mathcal{S}$. It follows that, $|\boldsymbol V||\mathcal{S}||\mathcal{B}|\gg \sum_{b\in\mathcal{B},s\in\mathcal{S}}|\boldsymbol\Theta_{b,s}|$ and thus  estimating the probabilities $P_{\boldsymbol\Theta_{b,s}}(\theta)$ instead of the expected regret reduces the  number of estimated variables and leads to smaller regret. However, in the case of large sets $\boldsymbol\Theta_{b,s}$ and in the special case where the sets $\boldsymbol\Theta_{b,s}$ are not finite as in the case of countable and discrete sets estimating the probabilities $P_{\boldsymbol\Theta_{b,s}}(\boldsymbol\theta)$ is not desirable. In this case we can use confidence bounds for the estimation of the $N$-dimensional expected vectors $E(\boldsymbol\theta;b,s)=\sum_{\boldsymbol\theta\in\boldsymbol\Theta_{b,s}}\boldsymbol\theta P_{\boldsymbol\Theta_{b,s}}(\boldsymbol\theta)$. This will significantly reduce the regret whenever $|\boldsymbol{V}|\gg N$ compared with estimating the expected reward for all the actions in the set $\boldsymbol{V}$.

\subsection{Linear Rewards with Additive Noise}
In this paper we consider a noise model that is captured by the random values of  the vectors $\boldsymbol \theta$. We note that similar random noise models are considered in the early works such as  \cite{1104485} by Anantharam et. al. and also by recent works such as \cite{doi:10.1287/moor.2020.1051}. Our paper provides three main contributions: namely, 1) the estimation of the transition matrix $P_{S}$ jointly for all choices of arms and action, 2) the estimation of the probability densities of $\boldsymbol\theta$, or similarly the expected vector values $\sum_{\boldsymbol\theta\in\boldsymbol\Theta_{b,s}} P_{\boldsymbol\Theta_{b, s}}(\boldsymbol\theta)\boldsymbol\theta$  for every arm $b$ and state $s$, instead of the reward function  for every action vector $\boldsymbol a$, arm $b$ and state $s$, 3) the recovery of the previous state $s_{t-1}$. In a scenario where the noise cannot be solely captured by the outcome of the vector $\boldsymbol\theta$, as for example as in the additive reward function 
\begin{flalign}\label{eq:noisy_reward_model}
r_t(b_t,\boldsymbol a_t) = \left\langle\boldsymbol a_t,\boldsymbol \theta(b_t,s_t)\right\rangle+\eta_t,
\end{flalign}
where $\eta_t$ is some random noise process,
we can consider a model where the previous state is known as is assumed, for example, in the classical Markov decision process model analyzed in many works such as \cite{Auer07logarithmiconline,Jaksch:2010:NRB:1756006.1859902,Fruit:2018}. In this case our first two contribution still hold as our numerical results clearly demonstrate.

\subsection{Weakly Communicating Markov Chains}
In this work we consider strongly communicating Markov chain $S$ where all the states are recurrent and propose Algorithm \ref{algo:UCRL_instantaneous} to minimize the expected regret. While the analysis of Algorithm \ref{algo:UCRL_instantaneous} does not apply to a weakly communicating Markov chains where $T_M$ can be infinite, the key contributions that  this paper provides, namely estimating the transition probabilities jointly to all arms and actions, estimating the probabilities $P_{\boldsymbol\Theta_{b,s}}$ instead of the reward for each choice of action and estimating the previous state with bounded regret, can be applied in a similar way to algorithms that consider weakly communicating setups such as \cite{Jaksch:2010:NRB:1756006.1859902,Fruit:2018}.

\section{Conclusion}
This work presented the restless hidden Markov bandit model with linear rewards in which the action of a decision maker does not affect the Markov process that governs the state of the system. Additionally, the system state is not revealed to the decision maker, but rather it is estimated from the previous actions and arms played and their respective rewards.  We showed that by increasing the regret by an arbitrarily small value (independent of $T$) the decision maker can learn the state of the system. Furthermore, we also developed an algorithm that takes advantage of the structural side information, i.e., the linearity of the reward function and the common transition matrix, to yield logarithmic regret that does not depend on the size of the action space (which can be exponential with the number of dimensions).  This is a significant improvement to a naive implementation of an existing algorithm for Markov decision processes and restless Markovian bandits. 

\appendices{} 
\section{Proof of Lemma \ref{lemma_UCRL_basic_upper}}\label{proof_lemma_UCRL_basic_upper}
\begin{proof}[Proof: Part A]
	First, we prove that $\Pr\left(|\hat{P}_{t,S}(\tilde{s},\check{s})-P_S(\tilde{s},\check{s})|>\text{conf}_{S}(t,s)\right)\leq\frac{(t-1)^{-\alpha}}{2|\mathcal{S}|^2}$. 
	By the law of total probability
	\begin{flalign}
		&\Pr\left(|\hat{P}_{t,S}(\tilde{s},\check{s})-P_S(\tilde{s},\check{s})|>\text{conf}_{S}(t,s)\right)\nonumber\\
		&=\sum_{k=0}^{t-1}\Pr\left(|\hat{P}_{t,S}(\tilde{s},\check{s})-P_S(\tilde{s},\check{s})|>\text{conf}_{S}(t,s),N_t(\tilde{s})=k\right)\nonumber\\
		&=
		\Pr\left(\left|\frac{1}{|\mathcal{S}|}-P_S(\tilde{s},\check{s})\right|>1,N_t(\tilde{s})=0\right)\nonumber\\
		&+
		\sum_{k=1}^{t-1}\Pr\left(\left|\frac{N_t(\tilde{s},\check{s})}{k}-P_S(\tilde{s},\check{s})\right|\right.>\nonumber\\
		&\left.\min\left\{1,\sqrt{\frac{\log(4(t-1)^{\alpha}|\mathcal{S}|^2)}{2k}}\right\},N_t(\tilde{s})=k\right).
	\end{flalign}
	Now, if $k=0$, then $\text{conf}_{S}(t,s)=1$ and 
	$\Pr\left(|\hat{P}_{t,S}(\tilde{s},\check{s})-P_S(\tilde{s},\check{s})|>1,N_t(\tilde{s})=k\right)=0$. 
	By definition, $N_t(\tilde{s},\check{s}) = \sum_{i=1}^{t-1}\mathbbm{1}_{\{(s_{i},s_{i+1})=(\tilde{s},\check{s})\}}$  and $N_t(\tilde{s}) = \sum_{i=1}^{t-1}\mathbbm{1}_{\{s_{i}=\tilde{s}\}}$ where $s_i$
	is the state at time $i$. 
	Thus for $k>1$, we have that 
	\begin{flalign}
		&\Pr\left(\left|\frac{N_t(\tilde{s},\check{s})}{k}-P_S(\tilde{s},\check{s})\right|>\right.\nonumber\\
		&\qquad\left.\min\left\{1,\sqrt{\frac{\log(4(t-1)^{\alpha}|\mathcal{S}|^2)}{2k}}\right\},N_t(\tilde{s})=k\right)\nonumber\\
		&=\Pr\left(\left|\frac{\sum_{i=1}^{t-1}\mathbbm{1}_{\{(s_{i},s_{i+1})=(\tilde{s},\check{s})\}}}{k}-P_S(\tilde{s},\check{s})\right|>\right.\nonumber\\
		&\qquad\left.\min\left\{1,\sqrt{\frac{\log(4(t-1)^{\alpha}|\mathcal{S}|^2)}{2k}}\right\},\sum_{i=1}^{t-1}\mathbbm{1}_{\{s_{i}=\tilde{s}\}}=k\right).
	\end{flalign}
	Define by $t_j$ the (random) time of the $j$th occurrence of the state $\tilde{s}$ in the infinite Markovian  sequence $s_1,s_2,\ldots,s_t,s_{t+1}\ldots$. We have that, 
	\begin{flalign}
		&\Pr\left(\left|\frac{1}{k}\sum_{i=1}^{t-1}\mathbbm{1}_{\{(s_{i},s_{i+1})=(\tilde{s},\check{s})\}}-P_S(\tilde{s},\check{s})\right|>\right.\nonumber\\
		&\qquad\left.\min\left\{1,\sqrt{\frac{\log(4(t-1)^{\alpha}|\mathcal{S}|^2)}{2k}}\right\},\sum_{i=1}^{t-1}\mathbbm{1}_{\{s_{i}=\tilde{s}\}}=k\right)\nonumber\\
		&=\Pr\left(\left|\frac{1}{k}\sum_{j=1}^{k}\mathbbm{1}_{\{(s_{t_j},s_{t_j+1})=(\tilde{s},\check{s})\}}-P_S(\tilde{s},\check{s})\right|>\right.\nonumber\\
		&\qquad\left.\min\left\{1,\sqrt{\frac{\log(4(t-1)^{\alpha}|\mathcal{S}|^2)}{2k}}\right\},\sum_{i=1}^{t-1}\mathbbm{1}_{\{s_{i}=\tilde{s}\}}=k\right)\nonumber\\
		&\leq \Pr\left(\left|\frac{1}{k}\sum_{j=1}^{k}\mathbbm{1}_{\{(s_{t_j},s_{t_j+1})=(\tilde{s},\check{s})\}}-P_S(\tilde{s},\check{s})\right|>\right.\nonumber\\
		&\qquad\left.\min\left\{1,\sqrt{\frac{\log(4(t-1)^{\alpha}|\mathcal{S}|^2)}{2k}}\right\}\right)
	\end{flalign}
	
	Now, by the chain rule for the distribution function and the Markovity of the process $(s_i)_{i=1}^{\infty}$ we have that
	\begin{flalign}\label{eq:Markov_time_independent}
		&\Pr\left((t_i,s_{t_i}=\tilde{s},s_{t_i+1})_{i=1}^k\right) \nonumber\\
		& \hspace{1.5cm}= \Pr(t_1)\Pr(s_{t_1+1}|s_{t_1}=\tilde{s})\nonumber\\
		&\hspace{2cm}\qquad\cdot \prod_{i=2}^k\Pr(t_i|s_{t_{i-1}+1})\Pr(s_{t_i+1}|s_{t_i}=\tilde{s})\nonumber\\
		& \hspace{1.5cm}= \Pr(s_{t_1+1}|s_{t_1}=\tilde{s})\prod_{i=2}^k\Pr(s_{t_i+1}|s_{t_i}=\tilde{s})\nonumber\\
		&\hspace{2cm}\qquad\cdot \Pr(t_1) \prod_{i=2}^k\Pr(t_i|s_{t_{i-1}+1}).
	\end{flalign}
	Thus by the law of total probability over $t_1,\ldots,t_k$, it follows that 
	\begin{flalign}
		&\Pr\left(\left|\frac{1}{k}\sum_{j=1}^{k}\mathbbm{1}_{\{(s_{t_j},s_{t_j+1})=(\tilde{s},\check{s})\}}-P_S(\tilde{s},\check{s})\right|>\right.\nonumber\\
		&\qquad\hspace{2cm}\left.\min\left\{1,\sqrt{\frac{\log(4(t-1)^{\alpha}|\mathcal{S}|^2)}{2k}}\right\}\right)\nonumber\\
		&\leq \Pr\left(\left|\frac{1}{k}\sum_{\ell=1}^{k}\mathbbm{1}_{\{(\tilde{s},s_{\ell})=(\tilde{s},\check{s})\}}-P_S(\tilde{s},\check{s})\right|>\right.\nonumber\\
		&\qquad\hspace{2cm}\left.\min\left\{1,\sqrt{\frac{\log(4(t-1)^{\alpha}|\mathcal{S}|^2)}{2k}}\right\}\right)
	\end{flalign}
	where $\mathbbm{1}_{\{(\tilde{s},s_{\ell})=(\tilde{s},\check{s})\}}$ are $k$ i.i.d. Bernoulli random variables with probability of being one  $P_S(\tilde{s},\check{s})$.
	Now, for every $k$ we have that 
	\begin{flalign}
		&\Pr\left(\left|\frac{1}{k}\sum_{i=1}^k\mathbbm{1}_{\{(\tilde{s},s_i)=(\tilde{s},\check{s})\}}-P_S(\tilde{s},\check{s})\right|>\right.\nonumber\\
		&\hspace{2.5cm}\left.\min\left\{1,\sqrt{\frac{\log(4(t-1)^{\alpha}|\mathcal{S}|^2)}{2k}}\right\}\right)\nonumber\\
		&\leq \Pr\left(\left|\frac{1}{k}\sum_{i=1}^k\mathbbm{1}_{\{(\tilde{s},s_i)=(\tilde{s},\check{s})\}}-P_S(\tilde{s},\check{s})\right|\right.\nonumber\\
		&\hspace{3cm}\left. >\sqrt{\frac{\log(4(t-1)^{\alpha}|\mathcal{S}|^2)}{2k}}\right)\nonumber\\
		&\leq \frac{1}{2(t-1)^{\alpha}|\mathcal{S}|^2},
	\end{flalign}
	where the last inequality follows by the Hoeffding inequality.
\end{proof}
\begin{proof}[Proof: Part B]
	Since given a choice of the arm $b$ and the state $\tilde{s}$,  realizations that generated from the distribution $P_{\boldsymbol\Theta_{b,s}}(\boldsymbol\theta)$ are statistically independent. 
	Thus, the inequality 
	\[\Pr\left(|\hat{P}_{t,\boldsymbol\Theta_{b,s}}(\boldsymbol\theta)-P_{\boldsymbol\Theta_{b,s}}(\boldsymbol\theta)|>\text{conf}_{\boldsymbol\Theta}(t,b,s)\right)\leq \frac{(t-1)^{-\alpha}}{2|\boldsymbol\Theta_{b,s}||\mathcal{B}||\mathcal{S}|}\] is derived by straightforward implementation of the Hoeffding inequality and the law of total probability over $N_t(b,s)$. 
\end{proof}
\section{Proofs of Propositions \ref{prop:NtoConf}-\ref{prop:confidence_length}}
\begin{proof}[Proof of Proposition \ref{prop:NtoConf}]
	Recall that $t_k$ is the starting time of round $k$ and that $t>t_k$.
	We separate the proof for the cases of $\text{conf}(t_k,\tilde{\boldsymbol\theta})=1$ and $\text{conf}(t_k,\tilde{\boldsymbol\theta})<1$.
	
	Suppose that $\text{conf}(t_k,s)<1$ and that $\text{conf}(t,s)\leq \frac{1}{2}\text{conf}(t_k,s)$. Then
	\begin{flalign}
		&\sqrt{\frac{\log\left(4(t-1)^{\alpha}|\mathcal{S}|^2\right)}{2N_t(s)}}\leq \frac{1}{2}
		\sqrt{\frac{\log\left(4(t_{k}-1)^{\alpha}|\mathcal{S}|^2\right)}{2N_{t_k}(s)}}\nonumber\\
		&\qquad\Longleftrightarrow\frac{\log\left(4(t-1)^{\alpha}|\mathcal{S}|^2\right)}{N_t(s)}\leq\frac{1}{4}\cdot
		\frac{\log\left(4(t_{k}-1)^{\alpha}|\mathcal{S}|^2\right)}{N_{t_k}(s)}\nonumber\\
		&\Longleftrightarrow 4\cdot\frac{\log\left(4(t-1)^{\alpha}|\mathcal{S}|^2\right)}{\log\left(4(t_{k}-1)^{\alpha}|\mathcal{S}|^2\right)}\leq\frac{N_t(s)}{N_{t_k}
			(s)}
	\end{flalign}
	Since $\frac{\log\left(4(t-1)^{\alpha}|\mathcal{S}|^2\right)}{\log\left(4(t_{k}-1)^{\alpha}|\mathcal{S}|^2\right)}>1$ we have that $\frac{N_t(s)}{N_{t_k}(s)}\geq 4$.
	
	Now, if $\text{conf}(t_k,s)=1$, then
	$\text{conf}(t,s)<\frac{1}{2}$.  Thus, if $N_{t_k}(s)=0$ then  $N_{t}(s)\geq 4N_{t_k}(s)$.
	Else, if $N_{t_k}(s)>0$ then $\sqrt{\frac{\log\left(4(t-1)^{\alpha}|\mathcal{S}|^2\right)}{2N_t(s)}}<\frac{1}{2}\sqrt{\frac{\log\left(4(t_{k}-1)^{\alpha}|\mathcal{S}|^2\right)}{2N_{t_k}(s)}}$, and we concluded above that in this case  $\frac{N_t(s)}{N_{t_k}(s)}\geq 4$.
	
	The proof of the second part of the proposition is similar.
\end{proof}

\begin{proof}[Proof of Proposition \ref{prop:num_rounds}]
	First, note that $N_t(s)\geq N_t(b,s)$ for every $s\in\mathcal{S},b\in\mathcal{B}$. Thus, by Proposition 1, for each round $k$ the shortest possible length of this round is four times the value of $\min_{b,s}\{N_{t_k}(b,s)\}$. It follows that the number of rounds can be upper-bounded by $|\mathcal{S}||\mathcal{B}|\rho_{\max}$ where $\rho_{\max}$ is the smallest positive integer such that 
	$T \leq |\mathcal{S}||\mathcal{B}|\sum_{i=1}^{\rho_{\max}}4^i$.
	It follows that $\rho_{\max}$ is the smallest positive integer greater than $\log_4\left(1+\frac{3T}{|\mathcal{S}||\mathcal{B}|}\right)$. Now, since 
	\begin{flalign}
		\log_4\left(1+\frac{3T}{|\mathcal{S}||\mathcal{B}|}\right)&=\frac{1}{2}\log_2\left(1+\frac{3T}{|\mathcal{S}||\mathcal{B}|}\right)\nonumber\\
		&\leq \log_2\left(1+\frac{T}{|\mathcal{S}||\mathcal{B}|}\right)
	\end{flalign}
	we have that the number of rounds is upper bounded by $|\mathcal{S}||\mathcal{B}|\left[\log_2\left(1+\frac{T}{|\mathcal{S}||\mathcal{B}|}\right)+1\right]$.
\end{proof}

\begin{proof}[Proof of Proposition \ref{prop:confidence_length}]
	This is a direct result of the definitions: $\text{conf}_{S}(t,s)\triangleq \min\left\{1,\sqrt{\frac{\log(4(t-1)^{\alpha}|\mathcal{S}|^2)}{2N_{t}(s)}}\right\}$
	and $\text{conf}_{\boldsymbol\Theta}(t,b,s)\triangleq \min\left\{1,\sqrt{\frac{\log(4(t-1)^{\alpha}|\boldsymbol\Theta_{b,s}||\mathcal{B}||\mathcal{S}|)}{2N_{t}(b,s)}}\right\}$.
	
	If $N_t(s)>\frac{2\log(4(t-1)^{\alpha}|\mathcal{S}|^2)}{\Delta^2}$, then
	\begin{flalign}
		\text{conf}_{S}(t,s)&= \min\left\{1,\sqrt{\frac{\log(4(t-1)^{\alpha}|\mathcal{S}|^2)}{2N_{t}(s)}}\right\}\nonumber\\
		&\leq \min\left\{1,\sqrt{\frac{\log(4(t-1)^{\alpha}|\mathcal{S}|^2)}{2\frac{2\log(4(t-1)^{\alpha}|\mathcal{S}|^2)}{\Delta^2}}}\right\}\nonumber\\
		& = \min\{1,\Delta/2\}\leq \Delta/2.
	\end{flalign}	
	Similarly, if 	
	$N_t(b,s)>\frac{2\log(4(t-1)^{\alpha}|\boldsymbol\Theta_{b,s}||\mathcal{B}||\mathcal{S}|)}{\Delta^2}$, then 
	\begin{flalign}
		\text{conf}_{\boldsymbol\Theta}(t,b,s)&= \min\left\{1,\sqrt{\frac{\log(4(t-1)^{\alpha}|\boldsymbol\Theta_{b,s}||\mathcal{B}||\mathcal{S}|)}{2N_{t}(b,s)}}\right\}\nonumber\\
		&\leq \min\left\{1,\sqrt{\frac{\log(4(t-1)^{\alpha}|\boldsymbol\Theta_{b,s}||\mathcal{B}||\mathcal{S}|)}{2\frac{2\log(4(t-1)^{\alpha}|\boldsymbol\Theta_{b,s}||\mathcal{B}||\mathcal{S}|)}{\Delta^2}}}\right\}\nonumber\\
		& = \min\{1,\Delta/2\}\leq \Delta/2.
	\end{flalign}
	
\end{proof}

\section{Proof of Equation \eqref{eq:regret_P_S_matrix}}
Next we present the lemma that proves \eqref{eq:regret_P_S_matrix}.

\textbf{Lemma:}
The expected regret caused by all suboptimal rounds $k$ such that there exist $\tilde{s},\check{s}\in\mathcal{S}$ such that $|\hat{P}_{t_k}(\tilde{s},\check{s})-P_s(\tilde{s},\check{s})|>\Delta/2$ 
is upper bounded by 
\begin{flalign}
	&4cr_{\max}|\mathcal{S}|T_M\frac{\log(4(T-1)^{\alpha}|\mathcal{S}|^2)}{\Delta^2}\nonumber\\
	&+2r_{\max}T_M|\mathcal{S}|^2|\mathcal{B}|\left[\log_4\left(\frac{T}{|\mathcal{S}||\mathcal{B}|}+1\right)+1\right]+r_{\max}|\mathcal{S}|
\end{flalign}	

\begin{proof}
	Suppose that there exists $\tilde{s}\in\mathcal{S}$ such that $|\hat{P}_{t_k,S}(\tilde{s},\check{s})-P_{S}(\tilde{s},\check{s})|>\Delta/2$ for the state $\check{s}$. Let $n(\tilde{s})$ be the number of such rounds  and let $\tau_1(\tilde{s}),\ldots,\tau_{n(\tilde{s})}$ be their respective lengths. Next we upper bound the expected value of the term $\sum_{i=1}^{n(\tilde{s})}\tau_{i}(\tilde{s})$ by dividing each suboptimal round $i$ into $\left\lfloor \frac{\tau_{i}(\tilde{s})}{2T_S}\right\rfloor$ sub-intervals. By the Markov inequality the probability to visit the state $\tilde{s}$  in a sub-interval is at least  $\frac{1}{2}$, for each of these sub-intervals. Thus, by the Azuma-Hoeffding inequality we have that:
	\begin{flalign}
		\Pr\left(N(\tilde{s},m)\geq \frac{m}{2}-\sqrt{m\log T}\right)\geq 1-\frac{1}{T}
	\end{flalign}
	where $N(\tilde{s},m)$ is the number sub-intervals  in which we visit  state $\tilde{s}$ out of $m$ intervals.
	
	Since the confidence bound is  greater than $\Delta/2$, the next possible policy update with confidence bound smaller than $\Delta/2$ must have a confidence bound in the interval  $[\Delta/4,\Delta/2]$.
	By Proposition 3 we have that 
	$N_T(\tilde{s})<\frac{8\log(4(T-1)^{\alpha}|\mathcal{S}|^2)}{\Delta^2}$, since $N(\tilde{s},m)\leq N_T(\tilde{s})$ it follows that:
	$
	\sum_{i=1}^{n(\tilde{s})}\left\lfloor \frac{\tau_{i}(\tilde{s})}{2T_M}\right\rfloor\leq c\frac{2\log(4T^{\alpha}|\mathcal{S}|^2)}{\Delta^2}
	$
	for some constant $c<14$ with probability $1-\frac{1}{T}$.
	It follows that 
	\begin{flalign}
		&\sum_{i=1}^{n(b,s)}\tau_{i}(b,s)\leq2T_Mc\frac{2\log(4T^{\alpha}|\boldsymbol\Theta_{b,s}||\mathcal{B}||\mathcal{S}|)}{\Delta^2}+2T_M n(\tilde{s})\nonumber\\
		&\leq 
		4T_Mc\frac{\log(4T^{\alpha}|\mathcal{S}|^2)}{\Delta^2}+2T_M |\mathcal{S}||\mathcal{B}|\left[\log_2\left(\frac{T}{|\mathcal{S}||\mathcal{B}|}+1\right)+1\right]
	\end{flalign}

	Finally, by the union bound over $\tilde{s}$ we have that the expected regret caused by suboptimal rounds in which the estimation of the transition probability is inaccurate is upper bounded by: 
	\begin{flalign}
		&4cr_{\max}|\mathcal{S}|T_M\frac{\log(4(T-1)^{\alpha}|\mathcal{S}|^2)}{\Delta^2}\nonumber\\
		&+2r_{\max}T_M|\mathcal{S}|^2|\mathcal{B}|\left[\log_2\left(\frac{T}{|\mathcal{S}||\mathcal{B}|}+1\right)+1\right]+r_{\max}|\mathcal{S}|.
	\end{flalign}
\end{proof}

\section{Proof of Lemma \ref{lemma:not_stationary_first}}\label{append:Lemma_2_proof}
\begin{proof}[Proof of Lemma \ref{lemma:not_stationary_first}]
	Recall the regret definition \eqref{eq:regret_df_compared_to_rho} that  $R(T) = T\rho(\pi^*)-\sum_{t=1}^{T} E[r_t(b_t^*,\boldsymbol a_t^*)]$. We prove Lemma \ref{lemma:not_stationary_first} by bounding the term $\sum_{t=1}^{T} E[r_t(b_t^*,\boldsymbol a_t^*)]$ from below.
	Recall that $(b_{\pi^*}(\tilde{s}),\boldsymbol a_{\pi^*}(\tilde s)) = \pi^*(\tilde s)$. Since $\mu_S$ is the stationary distribution of the Markov chain $P_{S}$ we have that
	\begin{flalign}
		&T\rho(\pi^*) \nonumber\\
		&= \sum_{t=1}^{T}\sum_{\bar s, \tilde s,\check s\in\mathcal{S}}\mu_S(\bar s)P_{S}^{t-1}(\bar s,\tilde s)P_{S}(\tilde s,\check s)\nonumber\\
		&\hspace{3cm}\cdot\sum_{\boldsymbol\theta\in\boldsymbol\Theta_{b_{\pi^*},\check s}}
		P_{\boldsymbol\Theta_{b_{\pi^*}(\tilde{s}),\check s}}(\boldsymbol\theta)\left\langle\boldsymbol a_{\pi^*}(\tilde s),\boldsymbol \theta\right\rangle\nonumber\\
		&= \sum_{\bar s\in\mathcal{S}}\mu_S(\bar s)\sum_{t=1}^{T}\sum_{\tilde{s},\check s\in\mathcal{S}}P_{S}^{t-1}(\bar s,\tilde s)P_{S}(\tilde s,\check s) \nonumber\\
		&\hspace{3cm}\cdot\sum_{\boldsymbol\theta\in\boldsymbol\Theta_{b_{\pi^*},\check s}}
		P_{\boldsymbol\Theta_{b_{\pi^*}(\tilde s),\check s}}(\boldsymbol\theta)\left\langle\boldsymbol a_{\pi^*}(\tilde s),\boldsymbol \theta\right\rangle.
	\end{flalign}
	Thus, there exists $s\in\mathcal{S}$ such that
	\begin{flalign}\label{eq:upper_reward_given_state}
		&\sum_{t=1}^{T}\sum_{\tilde{s},\check{s}\in\mathcal{S}} P_{S}^{t-1}( s,\tilde s)P_{S}(\tilde s,\check s)\nonumber\\
		&\hspace{2cm}\cdot\sum_{\boldsymbol\theta\in\boldsymbol\Theta_{b_{\pi^*},\check s}}
		P_{\boldsymbol\Theta_{b_{\pi^*}(\tilde s),\check s}}(\boldsymbol\theta)\left\langle\boldsymbol a_{\pi^*}( \tilde s),\boldsymbol \theta\right\rangle\geq T\rho(\pi^*).
	\end{flalign}
	Thus, for every $1\leq\tau\leq T$,  
	\begin{flalign}\label{eq:upper_reward_given_state2}
		&\sum_{t=\tau+1}^{T}\sum_{\tilde{s},\check{s}\in\mathcal{S}} P_{S}^{t-1}( s,\tilde s)P_{S}(\tilde s,\check s)\nonumber\\
		&\hspace{2cm}\cdot\sum_{\boldsymbol\theta\in\boldsymbol\Theta_{b_{\pi^*(\tilde{s})},\check s}}
		P_{\boldsymbol\Theta_{b_{\pi^*}(\tilde s),\check s}}(\boldsymbol\theta)\left\langle\boldsymbol a_{\pi^*}(\tilde s),\boldsymbol \theta\right\rangle\nonumber\\
		&\geq T\rho(\pi^*)-\sum_{t=1}^{\tau}\sum_{\tilde{s},\check{s}\in\mathcal{S}} P_{S}^{t-1}( s,\tilde s)P_{S}(\tilde s,\check s)\nonumber\\
		&\hspace{3cm}\cdot\sum_{\boldsymbol\theta\in\boldsymbol\Theta_{b_{\pi^*(\tilde{s})},\check s}}
		P_{\boldsymbol\Theta_{b_{\pi^*}(\tilde s),\check s}}(\boldsymbol\theta)\left\langle\boldsymbol a_{\pi^*}(\tilde s),\boldsymbol \theta\right\rangle.
	\end{flalign}

	Now, let $t_{s}$ be the first occurrence of state $s$ that fulfills (\ref{eq:upper_reward_given_state}), then for every $\bar{s}\in\mathcal{S}$ we have that
	\begin{flalign}\label{eq:upper_reward_given_state3}
		&E\left[\sum_{t=1}^{T}\sum_{\tilde{s},\check{s}\in\mathcal{S}} P_{S}^{t-1}( \bar s,\tilde s)P_{S}(\tilde s,\check s)\right.\nonumber\\
		&\hspace{2cm}\cdot\left.\sum_{\boldsymbol\theta\in\boldsymbol\Theta_{b_{\pi^*(\tilde{s})},\check s}}
		P_{\boldsymbol\Theta_{b_{\pi^*}(\tilde s),\check s}}(\boldsymbol\theta)\left\langle\boldsymbol a_{\pi^*}(\tilde s),\boldsymbol \theta\right\rangle\right]\nonumber\\ 
		&=E_{t_s}\left[E\left(\sum_{t=1}^{T}\sum_{\tilde{s},\check{s}\in\mathcal{S}} P_{S}^{t-1}( \bar s,\tilde s)P_{S}(\tilde s,\check s)\right.\right.\nonumber\\
		&\hspace{2cm}\cdot\left.\left.\sum_{\boldsymbol\theta\in\boldsymbol\Theta_{b_{\pi^*(\tilde{s})},\check s}}
		P_{\boldsymbol\Theta_{b_{\pi^*}(\tilde s),\check s}}(\boldsymbol\theta)\left\langle\boldsymbol a_{\pi^*}(\tilde s),\boldsymbol \theta\right\rangle|t_s\right)\right]\nonumber\\ 
		&=E_{t_s}\left[\sum_{t=1}^{t_s}\sum_{\tilde{s},\check{s}_t\in\mathcal{S}:\check{s}_{t_s}=s} P_{S}^{t-1}(\bar s,\tilde s)P_{S}(\tilde s,\check s_t)\right.\nonumber\\
		&\hspace{2cm}\cdot\left.\sum_{\boldsymbol\theta\in\boldsymbol\Theta_{b_{\pi^*(\tilde{s})},\check s_t}}
		P_{\boldsymbol\Theta_{b_{\pi^*}(\tilde s),\check s_t}}(\boldsymbol\theta)\left\langle\boldsymbol a_{\pi^*}(\tilde s),\boldsymbol \theta\right\rangle\right]\nonumber\\
		&\quad+E_{t_s}\left[\sum_{t=t_s+1}^{T}\sum_{\tilde{s},\check{s}\in\mathcal{S}} P_{S}^{t-1}( s,\tilde s)P_{S}(\tilde s,\check s)\right.\nonumber\\
		&\hspace{2cm}\cdot\left.\sum_{\boldsymbol\theta\in\boldsymbol\Theta_{b_{\pi^*(\tilde{s})},\check s}}
		P_{\boldsymbol\Theta_{b_{\pi^*}(\tilde s),\check s}}(\boldsymbol\theta)\left\langle\boldsymbol a_{\pi^*}(\tilde s),\boldsymbol \theta\right\rangle\right]\nonumber\\
		&\stackrel{(a)}{\geq}  E_{t_s}\left[\sum_{t=1}^{t_s}\sum_{\tilde{s},\check{s}_t\in\mathcal{S}:\check{s}_{t_s}=s} P_{S}^{t-1}(\bar s,\tilde s)P_{S}(\tilde s,\check s_t)\right.\nonumber\\
		&\hspace{2cm}\cdot\left.\sum_{\boldsymbol\theta\in\boldsymbol\Theta_{b_{\pi^*(\tilde{s})},\check s_t}}
		P_{\boldsymbol\Theta_{b_{\pi^*}(\tilde s),\check s_t}}(\boldsymbol\theta)\left\langle\boldsymbol a_{\pi^*}(\tilde s),\boldsymbol \theta\right\rangle\right]\nonumber\\
		&\quad+T\rho(\pi^*)-E_{t_s}\left[\sum_{t=1}^{t_s}\sum_{\tilde s,\check{s}_t\in\mathcal{S}} P_{S}^{t-1}( s,\tilde s)P_{S}(\tilde s,\check s)\right.\nonumber\\
		&\hspace{2cm}\cdot\left.\sum_{\boldsymbol\theta\in\boldsymbol\Theta_{b_{\pi^*(\tilde{s})},\check s}}
		P_{\boldsymbol\Theta_{b_{\pi^*}(\tilde s),\check s}}(\boldsymbol\theta)\left\langle\boldsymbol a_{\pi^*}(\tilde s),\boldsymbol \theta\right\rangle\right]\nonumber\\
		&\stackrel{(b)}{\geq} T\rho(\pi^*)-E(t_s)r_{\max}.
	\end{flalign}
	where the inequality (a) follows from (\ref{eq:upper_reward_given_state2}) 
	and the inequality (b) follows from the notation
	$r_{\max} = \max_{\boldsymbol a,\tilde{\boldsymbol a}\in\mathcal{A},\boldsymbol\theta,\tilde{\boldsymbol\theta}\in\bigcup_{(b,s)\in\mathcal{B}\times\mathcal{S}}\boldsymbol\Theta_{b,s}}\left\{\left\langle\boldsymbol a,\boldsymbol \theta\right\rangle-\left\langle\tilde{\boldsymbol a},\tilde{\boldsymbol \theta}\right\rangle\right\}$
	that appears before Theorem \ref{theorem:man_logarithmic_regret}.

	We can conclude the proof by the following inequalities
	\begin{flalign}
		&T\rho(\pi^*)-\sum_{t=1}^{T} E(r_t(\pi^*(s_{t-1})))\nonumber\\
		&\qquad= T\rho(\pi^*)-\sum_{t=1}^{T} E_{t_s}\{E[r_t(\pi^*(s_{t-1}))|t_s]\} \nonumber\\
		&\quad \stackrel{(a)}{\leq} T\rho(\pi^*)-[T\rho(\pi^*)-E(t_s)r_{\max}]\nonumber\\
		&\quad\stackrel{(b)}{\leq} r_{\max}T_M,
	\end{flalign}
	where (a) follows by (\ref{eq:upper_reward_given_state3}) and since  we assume in Lemma 2 that $(b_t^*,\boldsymbol a_t^*)=\pi^*(s_{t-1})$, and  (b) follows by the notation $T_M=\max_{\tilde{s},\check{s}\in\mathcal{S}} E(T_{\tilde{s},\check{s}})$ that appears before Theorem \ref{theorem:man_logarithmic_regret}. 
\end{proof}

\section{Incorporating the Regret Events to Prove Theorem 1}\label{append:union_bound_error_events}
Next we conclude the proof of Theorem \ref{theorem:man_logarithmic_regret} by explicitly calculating the error probability that is caused by the four regret events we analyzed, that is:
\begin{itemize}
	\item Regret caused by error in state recovery.
	\item Regret caused by suboptimal rounds in which the confidence intervals are larger than $\Delta$/2.
	\item Regret caused by failure of the confidence intervals. 
	\item Regret caused by the deviation of the initial distribution from the stationary distribution of the Markov chain $P_S$. 
\end{itemize}   

Now, by equation \eqref{eq:regret_df_compared_to_rho}, that defines the regret, we have that the expected regret of Algorithm \ref{algo:UCRL_instantaneous} is 
\begin{flalign}\label{eq:exp_regret_def_basic}
	E[R(T)]=E\left[T\rho(\pi^*)-\sum_{t=1}^T r_t(b_t,\boldsymbol a_t)\right],
\end{flalign}
where $b_t,\boldsymbol a_t$ are played according to Algorithm \ref{algo:UCRL_instantaneous} and $\pi^*$ is the optimal policy that maximizes \eqref{algo:UCRL_instantaneous}.
Now, we can rewrite (\ref{eq:exp_regret_def_basic}) as
\begin{flalign}\label{eq:exp_regret_def_basic2}
	&E[R(T)]=E\left[T\rho(\pi^*)-\sum_{t=1}^T r_t(b_t^*,\boldsymbol a_t^*)\right]\nonumber\\
	&\hspace{2.2cm}+E\left[\sum_{t=1}^T r_t(b_t^*,\boldsymbol a_t^*)-\sum_{t=1}^T r_t(b_t,\boldsymbol a_t)\right],
\end{flalign}
where $(b_t^*,\boldsymbol a_t^*)$ denotes playing the optimal policy assuming that the decision maker knows the identity of the previous state and $(b_t,\boldsymbol a_t)$ is the arm and action choices when playing according to Algorithm \ref{algo:UCRL_instantaneous}. 

Now, by the analysis of the regret caused by the deviation of the initial distribution from the stationary distribution of the Markov chain $P_S$, we have that
\begin{flalign}
	&E\left[T\rho(\pi^*)-\sum_{t=1}^T r_t(b_t^*,\boldsymbol a_t^*)\right]\nonumber\\
	&\qquad=T\rho(\pi^*)-E\left[\sum_{t=1}^T r_t(b_t^*,\boldsymbol a_t^*)\right]\nonumber\\
	&\qquad\leq r_{\max}T_M|\mathcal{S}||\mathcal{B}|\left[\log_2\left(\frac{T}{|\mathcal{S}||\mathcal{B}|}+1\right)+1\right].
\end{flalign}
Now, the term $E\left[\sum_{t=1}^T r_t(b_t^*,\boldsymbol a_t^*)-\sum_{t=1}^T r_t(b_t,\boldsymbol a_t)\right]$ depends on the three other regret events, that is, 
\begin{itemize}
	\item Regret caused by error in state recovery.
	\item Regret caused by suboptimal rounds in which the confidence intervals are larger than $\Delta$/2.
	\item Regret caused by failure of the confidence intervals.   
\end{itemize}
We prove in Section \ref{sec:state_recovery} that the expected regret  caused by first event is bounded, i.e., $O(1)$, the expected regret of the second event is upper bounded by
\begin{flalign}
	&4T_M|\mathcal{S}|c\frac{\log(4T^{\alpha}|\mathcal{S}|^2)}{\Delta^2}\nonumber\\
	&+2T_M |\mathcal{S}|^2|\mathcal{B}|\left[\log_2\left(\frac{T}{|\mathcal{S}||\mathcal{B}|}+1\right)+1\right]\nonumber\\
	&+r_{\max}|\mathcal{S}|+4C_{\boldsymbol\Theta_{\max}}|\mathcal{B}||\mathcal{S}|T_MT_Sr_{\max}c\frac{\log(4T^{\alpha}C_{\boldsymbol\Theta_{\max}}|\mathcal{B}||\mathcal{S}|)}{\Delta^2}\nonumber\\
	&+2C_{\boldsymbol\Theta_{\max}}|\mathcal{B}|^2|\mathcal{S}|^2T_MT_Sr_{\max} \left[\log_2\left(\frac{T}{|\mathcal{S}||\mathcal{B}|}+1\right)+1\right]\nonumber\\
	&+C_{\boldsymbol\Theta_{\max}}|\mathcal{B}||\mathcal{S}|r_{\max}T\frac{1}{T}.
\end{flalign} 
Additionally, we prove in Section \ref{sec:fail_confidence_bound} that the expected regret caused by third event is bounded, i.e., $O(1)$.

This proves that the expected regret of Algorithm \ref{algo:UCRL_instantaneous} is:

\begin{flalign}
	&O\left(C_{\boldsymbol\Theta_{\max}}|\mathcal{B}||\mathcal{S}|T_MT_Sr_{\max}\frac{\log\left(4T^{\alpha}C_{\boldsymbol\Theta_{\max}}|\mathcal{B}||\mathcal{S}|\right)}{\Delta^2}\right.\nonumber\\
	&\hspace{2cm}\left.+C_{\boldsymbol\Theta_{\max}}|\mathcal{B}|^2|\mathcal{S}|^2T_MT_Sr_{\max} \log_2\left(\frac{T}{|\mathcal{S}||\mathcal{B}|}+1\right)\right).
\end{flalign}

\bibliographystyle{IEEEtran}

\end{document}